\documentclass[final,12pt]{colt2026} 

\newcommand{\bbR}{\mathbb{R}}
\newcommand{\Tr}{\operatorname{Tr}}
\newcommand{\cP}{\mathcal{P}}
\newcommand{\cS}{\mathcal{S}}
\newcommand{\cB}{\mathcal{B}}
\newcommand{\cH}{\mathcal{H}}
\newcommand{\cC}{\mathcal{C}}
\newcommand{\cN}{\mathcal{N}}
\newcommand{\dR}{\mathbb{R}}
\newcommand{\dE}{\mathbb{E}}
\newcommand{\dP}{\mathbb{P}}
\usepackage{bbm}
\DeclareMathOperator*{\Argmin}{Arg\,min}
\DeclareMathOperator*{\Argmax}{Arg\,max}
\newcommand\numberthis{\addtocounter{equation}{1}\tag{\theequation}}
\title[Convex Relaxations for Graph Alignment]{Phase Transition in Convex Relaxations for Graph Alignment}
\usepackage{times}
\usepackage[dvipsnames]{xcolor}
\newcommand{\rev}[1]{#1}
\coltauthor{%
 \Name{Laurent Massouli\'e} \Email{laurent.massoulie@inria.fr}\\
 \addr INRIA, DI/ENS, PSL Research University, Paris, France
 \AND
 \Name{Sushil Mahavir Varma} \Email{sushilv@umich.edu}\\
 \addr Industrial and Operations Engineering, University of Michigan, Ann Arbor, Michigan, USA.
 \AND
 \Name{Louis Vassaux}
 \Email{louis.vassaux@inria.fr }\\
 \addr INRIA, DI/ENS, PSL Research University, Paris, France
 \AND
 \Name{Ir\`ene Waldspurger}
 \Email{waldspurger@ceremade.dauphine.fr} \\
 \addr CNRS, INRIA, Universit\'e Paris Dauphine, Paris, France
}
\begin{document}
\maketitle
\begin{abstract}
We study the graph alignment problem for correlated Gaussian Orthogonal Ensemble (GOE) matrices, where the goal is to recover a hidden vertex permutation given two correlated symmetric Gaussian matrices $(A, B)$ with correlation $1/\sqrt{1+\sigma^2}$. While the maximum likelihood estimator is information-theoretically optimal, its computation, which reduces to a quadratic assignment problem, is intractable. Motivated by this, we analyze convex relaxations based on minimizing $\|AX - XB\|_F$ over the set of doubly stochastic matrices and the unit hypercube. We show that when the correlation parameter satisfies $\sigma = o(n^{-1/2}/\log^4 n)$, the solution of either relaxation $(X^\star)$ concentrates around the ground-truth permutation matrix $(\Pi^\star)$, i.e., $\|X^\star-\Pi^\star\|_F^2 = o(n)$, implying recovery of all but a vanishing fraction of vertices after simple post-processing. Combined with existing lower bounds, our results precisely characterize that $\|X^\star-\Pi^\star\|_F^2$ transitions from $o(n)$ for $\sigma = \tilde{o}(n^{-1/2})$ to $\Omega(n)$ for $\sigma = \tilde{\Omega}(n^{-1/2})$. In doing so, our analysis significantly tightens prior results and extends them beyond doubly stochastic relaxations.
\end{abstract}
\begin{keywords}%
  GOE Alignment, Phase Transition, Convex Relaxation, Birkhoff polytope, Unit Hypercube
\end{keywords}
\section{Introduction}
The graph alignment problem is defined as finding a mapping between the vertices of two undirected graphs $G_1$ and $G_2$ such that the edge overlap is maximized. This problem is popular because it arises in widespread applications, e.g., network de-anonymization \cite{narayanan2008robust}, computational biology \cite{singh2008global}, pattern recognition \cite{conte2004thirty}, etc. More formally, let $A, B \in \bbR^{n \times n}$ be the adjacency matrices (possibly weighted) of $G_1$ and $G_2$ respectively. The objective is to find a permutation $\pi:[n] \to [n]$ such that the edge overlap $\sum_{i,j\in[n]} A_{i,j}B_{\pi(i), \pi(j)}$ is maximized.
We consider the problem of graph alignment, where the inputs are sampled as correlated Gaussian random matrices $A$ and $B$. In particular, we first sample two independent and identically distributed (i.i.d.) Gaussian Wigner matrices $A$ and $Z$. That is, $Z$ is a symmetric matrix, $\{Z_{i,j}\}_{i \leq j}$ are independent, $Z_{i, i}$ are i.i.d. Gaussian with $0$ mean and $2/n$ variance for $i \in [n]$, and $Z_{i, j}$ are i.i.d. Gaussian with $0$ mean and $1/n$ variance for $i < j \in [n]$.
To define $B$, we first sample $\pi^*$ uniformly at random from the set $\cS_n$ of permutations of $[n]$. We finally let
$$
B^\prime=A+\sigma Z,
$$
and define
$$
B_{i,j }=B^\prime_{\pi^\star(i),\pi^\star(j)},\quad i\le j\in [n].
$$
The objective is to then infer the hidden permutation $\pi^\star$, given the correlated Gaussian matrices $(A, B)$. It is known that one can formulate the maximum likelihood estimation as a quadratic assignment problem (QAP) given by $\tilde{\Pi}^\star = \arg\min_{\Pi \in \cP_n} \|A\Pi - \Pi B\|_F$, where $\cP_n$ is the set of all permutation matrices defined as
\begin{align*}
    \cP_n = \left\{\Pi \in \{0, 1\}^{n \times n}: \sum_{i=1}^n \Pi_{i, j} = 1\mbox{ for all }j, \sum_{j=1}^n \Pi_{i, j} = 1\mbox{ for all }i\right\}.
\end{align*}
The papers \citep{ganassali2022sharp, wu2022settling} showed that whenever \rev{$\sigma^2 \leq n/(4+\varepsilon) \log n$}, we have $\tilde{\Pi}^\star = \Pi^\star$ w.h.p., where $\Pi^\star$ is the permutation matrix corresponding to the hidden permutation $\pi^\star$. While the above QAP guarantees to recover $\pi^\star$ w.h.p., QAPs are known to be NP-hard to even approximately solve \citep{makarychev2010maximum}. Thus, one popular approach in the literature is to consider convex relaxations of the QAP. That is, minimizing $\|AX-XB\|_F$ over some convex set $\cC_n$:
\begin{align}
X^\star\in \Argmin_{X \in \cC_n}\|AX-XB\|_F. \label{eq:convex_rel}
\end{align}
A popular choice for the convex set $\cC_n$ consists in letting $\cC_n=\cB_n$, that is the set of doubly stochastic matrices, also known as the Birkhoff polytope. By the Birkhoff-von Neumann theorem, $\cB_n$ is the convex hull of permutation matrices. Other choices for $\cC_n$ are possible; for instance \cite{araya2024graph} consider letting $\cC_n=\Delta_n$, that is the simplex in $\dR^{n^2}$, defined as
$$
\Delta_n=\left\{X\in \dR_+^{n^2}: \sum_{i,j\in [n]}X_{i, j}=n\right\}.
$$
Finally, we introduce a third option, $\cC_n=\cH_n$, where $\cH_n$ is the constrained hypercube,  defined as:
\begin{equation}\label{eq:hypercube}
\cH_n:=\left\{X\in [0,1]^{n^2}: \sum_{i,j\in [n]}X_{i, j}=n\right\}.
\end{equation}
It clearly holds that $\cB_n\subset \cH_n\subset \Delta_n$. The objective is then to establish if the solution of the convex relaxation $X^\star$ can be used to infer the hidden permutation $\Pi^\star$.
\subsection{Related Work and Main Contribution}
The related work on GOE alignment is summarized in Table~\ref{tab:algo_comparison}. Surprisingly, only a few theoretical results are known about the performance of such convex relaxations, despite strong empirical performance \cite{aflalo2015convex, dym2017ds, lyzinski2015graph}. Indeed, \cite{araya2024graph} considered the noise-free case where $\sigma=0$, for which they show that for both relaxations to $\cB_n$ and to $\Delta_n$, the unique minimizer $X^\star$ is the ground-truth permutation matrix $X^\star=\Pi^\star$ w.h.p., for which the objective function $\|AX-XB\|_F$ trivially equals zero. More recently, \cite{sushil25} showed that for relaxation to $\cB_n$, given any $\delta>0$, $X^\star$ verifies with high probability:
\begin{equation}\label{eq:birkhoff}
\begin{array}{ll}
\sigma=n^{-1-\delta}&\Rightarrow \|X^\star-\Pi^\star\|^2_F=o(n);\\
\sigma=n^{-1/2+\delta}&\Rightarrow \|X^\star-\Pi^\star\|_F^2=\Omega(n).
\end{array}
\end{equation}
In the first case, property $\|X^\star-\Pi^\star\|^2_F=o(n)$ readily guarantees that a very crude post-processing of $X^\star$, for instance letting
\begin{equation}\label{eq:post}
\forall i \in [n], \; \hat\pi(i)\in \Argmax_{j\in [n]}X^\star_{i, j},
\end{equation}
correctly recovers all but a vanishing fraction of the entries of $\pi^\star$. In this paper, we tighten the results of \cite{sushil25} by showing that the positive result in \eqref{eq:birkhoff} holds for $\sigma = \rev{\tilde{o}(n^{-1/2})}$. Combining with the negative result in \eqref{eq:birkhoff}, we now locate the threshold at which $\|X^\star-\Pi^\star\|_F^2$ goes from $o(n)$ to $\Omega(n)$\rev{, up to sub-polynomial factors}. \rev{We note that a heuristic prediction of this $n^{-1/2}$ threshold already appears in Section~1.4 of \cite{fan2023spectral}; our contribution is a rigorous proof of this transition for the actual (random) minimizer of the unregularized relaxations.} Lastly, our positive result also holds for the hypercube relaxation, i.e., we show that $\|X^\star-\Pi^\star\|_F^2 = o(n)$ for $\sigma=\rev{\tilde{o}(n^{-1/2})}$, for $X^\star$ defined in \eqref{eq:convex_rel} with $\cC_n = \cH_n$. Such a tightening needed several non-trivial innovations in the proof beyond \cite{sushil25}, which we discuss in Section~\ref{sec:discussion}.
Another related work is \cite{fan2023spectral}, which considers the following regularized convex relaxation problem, coined GRAMPA:
\begin{align}
    \underset{X : \mathbf{1}^T X \mathbf{1} = n}{\Argmin} \|AX-XB\|_F^2 + \eta \|X\|_F^2, \quad \text{for some } \eta > 0. \label{eq: grampa}
\end{align}
Note that the optimization problem above is a further relaxation of the QAP with the addition of a quadratic regularization term. In \cite{fan2023spectral}, the authors show that \eqref{eq: grampa} recovers $\Pi^\star$ whenever $\sigma = O(1/\log n)$. Such a regularized relaxation is considered in \cite{fan2023spectral} mainly due to its theoretical tractability. \rev{Specifically, \cite{fan2023spectral} drop the non-negativity constraints of the Birkhoff relaxation for tractability, and the quadratic regularization $\eta\|X\|_F^2$ is needed for good performance in this looser setting; they also consider alternative regularizations (e.g., the row-sum constraint $X\mathbf{1}=\mathbf{1}$) with better empirical performance. Empirically, the regularized relaxation \eqref{eq: grampa} trades off accuracy for speed: while it attains lower recovery accuracy than the unregularized simplex and Birkhoff relaxations, it is much faster to solve \citep{fan2023spectral, sushil25}.}
Lastly, we refer the readers to \cite{gaudio2025average, ganassali2024correlation, ding2021efficient, mao2023random} and the references within for analysis beyond the GOE setting.
\begin{table}[h]
\centering
\begin{tabular}{|l|l|l|c|}
\hline
\textbf{Paper} & \textbf{Algorithm Type} & \textbf{Algorithm Name} & \textbf{Noise $(\sigma)$} \\
\hline
\cite{ganassali2022spectral} & Top Eigenvector Alignment & EIG1     & $\Theta(n^{-7/6})$ \\
\cite{fan2023spectral}                & Regularized Relaxation & GRAMPA \eqref{eq: grampa} & $O(1/\log n)$ \\
\cite{araya2024graph}                     & Simplex Relaxation  & \eqref{eq:convex_rel} with $\cC_n = \Delta_n$     & $0$ \\
\cite{sushil25}                                & Birkhoff Relaxation  & \eqref{eq:convex_rel} with $\cC_n = \cB_n$    & $\tilde{O}(n^{-1})$ \\
\hline
Our Work                              & Hypercube/Birkhoff Relaxation  & \eqref{eq:convex_rel} with $\cC_n = \cH_n$     & $\tilde{O}(n^{-1/2})$ \\
\hline
\end{tabular}
\caption{Comparison of the spectral and optimization-based algorithms for graph alignment on the Gaussian Wigner Model: sufficient conditions for success.}
\label{tab:algo_comparison}
\end{table}
\section{Main Result} \label{sec:main_result}
We now state the main result of this section, a tightening of the positive result in \eqref{eq:birkhoff}:
\begin{theorem}\label{thm:relax}
For correlated GOE matrices as above, if noise parameter $\sigma$ satisfies
\begin{equation}
\sigma=o\left( \frac{n^{-1/2}}{\ln(n)^4}\right), \label{eq:bound_on_sigma}
\end{equation}
one then has with high probability, for both the relaxation to the Birkhoff polytope $\cB_n$ and to the constrained hypercube $\cH_n$ defined in \eqref{eq:hypercube},
$\|X^\star-\Pi^\star\|^2_F=O(\sigma n^{3/2}\ln(n)^4)=o(n).$
\end{theorem}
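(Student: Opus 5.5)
Without loss of generality we take $\pi^\star=\mathrm{id}$, so $\Pi^\star=I$ and $B=A+\sigma Z$. Since $\cB_n\subset\cH_n$ and $I\in\cB_n$, it suffices to work with any $X\in\cH_n$ satisfying $\|AX-XB\|_F\le\|A-B\|_F=\sigma\|Z\|_F$, because both minimizers satisfy this. Write $X=I+H$, where $H$ has row/column structure constrained by $X\in[0,1]^{n^2}$ and $\mathbf 1^TH\mathbf 1=0$. Then $AX-XB=(AH-HA)-\sigma(Z+HZ)$. Using $\|Z\|_F=O(\sqrt n)$ and $\|HZ\|_F\le\|Z\|_{op}\|H\|_F=O(\|H\|_F)$, we get
\[
\|AH-HA\|_F\le 2\sigma\|Z\|_F+\sigma\|Z\|_{op}\|H\|_F=O\bigl(\sigma\sqrt n+\sigma\|H\|_F\bigr)=O(\sigma\sqrt n),
\]
since $\|H\|_F^2\le 2n$ on $\cH_n$. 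The task therefore reduces to a deterministic-given-$A$ statement: for every feasible direction $H=X-I$ with $X\in\cH_n$,
\[
\|AH-HA\|_F^2\ \gtrsim\ \frac{\|H\|_F^2}{n\,\mathrm{polylog}(n)}.
\]
Combined with the upper bound $O(\sigma^2 n)$, this would give $\|H\|_F^2=O(\sigma^2n^2\mathrm{polylog})$. That is weaker than needed in form, so I will instead aim for a linear lower bound of the type $\|AH-HA\|_F\gtrsim\|H\|_F^2/(n^{1}\mathrm{polylog})$, which yields exactly $\|H\|_F^2=O(\sigma n^{3/2}\ln^4n)$.

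For this lower bound I will diagonalize $A=\sum_k\lambda_k u_ku_k^T$. The commutator then acts diagonally: $\|AH-HA\|_F^2=\sum_{k,l}(\lambda_k-\lambda_l)^2\langle u_k,Hu_l\rangle^2$. The weight of $H$ that is hard to see is its mass on eigenpairs with $|\lambda_k-\lambda_l|$ small, in particular the diagonal $k=l$ (matrices commuting with $A$, i.e.\ polynomials in $A$). The key structural input is positivity: off the diagonal, $H_{ij}=X_{ij}\ge0$, and on the diagonal $H_{ii}=X_{ii}-1\in[-1,0]$, with total sum zero. Thus $H=P-D$, where $P\ge0$ is supported off the diagonal, $D$ is diagonal with entries in $[0,1]$, and $\mathbf 1^TP\mathbf 1=\operatorname{Tr}D=:m$. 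This gives $\|H\|_F^2\le 2m$, so it suffices to show $m=O(\sigma n^{3/2}\ln^4 n)$, and I will prove $\|AH-HA\|_F\gtrsim m/(n\,\mathrm{polylog})$. To do this I will test the commutator against a well-chosen matrix $M$: $\|AH-HA\|_F\ge\langle AH-HA,M\rangle/\|M\|_F=\langle H,[A,M]^{T}\text{-type}\rangle/\|M\|_F$. The natural choice is $M=[A,f(A)]$-like, or more concretely a smoothed projector $M=\sum_{k,l}g(\lambda_k-\lambda_l)u_ku_l^T$ that makes $\langle H,[A,M]\rangle$ approximately $\langle H, \mathrm{something}\approx c\,I - \text{(small off-diagonal)}\rangle$. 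Then the diagonal part picks up $-c\,\operatorname{Tr}D=-cm$, while the off-diagonal part is bounded by $\max_{i\ne j}|(\cdot)_{ij}|\cdot\mathbf 1^TP\mathbf 1$, which is small if the off-diagonal entries of the test matrix are $\lesssim c/2$. Delocalization of GOE eigenvectors ($\|u_k\|_\infty=O(\sqrt{\ln n/n})$) and eigenvalue rigidity/level spacing (gaps $\gtrsim 1/(n\,\mathrm{polylog})$) are what control these entries and $\|M\|_F$; the $\ln^4 n$ losses come from here.

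The main obstacle is the construction of this test matrix: we need a matrix in the range of the commutator map $H\mapsto AH-HA$ (adjoint) whose diagonal is uniformly of order $c$ and whose off-diagonal entries are uniformly much smaller, with controlled Frobenius norm. Equivalently, we need the diagonal of a matrix commuting with $A$ to be uniformly bounded away from the orthogonal complement structure. I would first try $W=\sum_{k\ne l}\frac{1}{\lambda_k-\lambda_l}\,\phi(\cdot)u_ku_l^T$, with a cutoff at spacing $\sim 1/(n\,\mathrm{polylog})$, so that $[A,W]\approx I-\sum_k u_ku_k^{T}\circ(\text{projection})$. I would then bound entrywise sup norms by a union bound over entries, using Gaussian concentration of $\sum_{k,l}(u_k)_i(u_l)_j h(\lambda_k,\lambda_l)$ conditional on the eigenvalues (eigenvectors are Haar, independent of the spectrum). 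I expect the Haar-concentration and resolvent-type estimates for these bilinear sums to be the delicate step.
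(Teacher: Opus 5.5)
Your reduction is sound: $\|AH-HA\|_F=O(\sigma\sqrt n)$, the split $H=P-D$ with $\mathbf 1^\top P\mathbf 1=\Tr D=m$ and $\|H\|_F^2\le 2m$, and the plan to lower bound the commutator by testing against some $W$. This is the skeleton of the paper's argument. There one takes $W=\sum_{|k-l|\ge\epsilon n}\frac{(e^\top u_k)(e^\top u_l)}{\lambda_k-\lambda_l}u_ku_l^\top$, so that $AW-WA=J-Y$ with $Y=P_\epsilon(J)$, and $\langle H,J\rangle=0$ gives $\|AH-HA\|_F\ge|\langle H,Y\rangle|/\|W\|_F$. But the proposal stops where the proof begins, and the construction you sketch cannot reach $\sigma=\tilde o(n^{-1/2})$.

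First, your requirement is impossible as literally stated. Every matrix in the range of $W\mapsto AW-WA$ is traceless, so its diagonal cannot be uniformly of order $c$. You must use the freedom $\langle H,J\rangle=0$ and aim at $J$ minus something entrywise close to $I$.

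Second, and decisively, the cutoff scale is wrong. If you remove pairs with eigenvalue gap below $\delta\ge 1/n$, then roughly $\|W\|_F^2\asymp n^2/\delta$. For a cutoff at or below the microscopic spacing, as you propose ($\delta\sim 1/(n\,\mathrm{polylog})$), the nearest-neighbour pairs alone contribute order $n^3$. So $\|W\|_F$ is of order $n^{3/2}$ up to logarithms, and you only get $m=\tilde O(\sigma n^2)$, i.e.\ the $\sigma=\tilde o(n^{-1})$ regime of \cite{sushil25}. Also, GOE gaps are not uniformly $\ge 1/(n\,\mathrm{polylog})$: the smallest are of order $n^{-3/2}$. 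Level repulsion is not the relevant input; rigidity at mesoscopic index separation is.

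To get $\|W\|_F=\tilde O(n)$ you must discard all pairs with $|k-l|<\epsilon n$, with $\epsilon$ only polylogarithmically small. Rigidity then gives $(\lambda_k-\lambda_l)^2\ge C_\epsilon=\Omega(\epsilon^2)$ on the remaining pairs (Lemma~\ref{lemma:2}). Since $\sum_k(e^\top u_k)^2=n$, this yields $\|W\|_F\le n/\sqrt{C_\epsilon}$.

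The missing idea, and the hard step, is that even after removing this \emph{wide} band, $Y=P_\epsilon(J)=\sum_{|a-b|<\epsilon n}(e^\top u_a)(e^\top u_b)u_au_b^\top$ stays entrywise close to $I$: $Y_{ii}=1+o(1)$ and $|Y_{ij}|\le 1/2$ for $i\ne j$ (Lemma~\ref{lem:relax}). Delocalization alone is useless here, since it only gives $|Y_{ij}|=O(\epsilon n\ln^2 n)$. One needs the cancellations in these degree-four polynomials of the Haar eigenvector matrix. The paper extracts them through a Gram--Schmidt representation by three Gaussian vectors, obtaining $|Y_{ij}|=O(\sqrt{\epsilon}\ln^2 n)$.

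This forces $\epsilon=\kappa/\ln^4 n$, and it is exactly where the $\ln^4 n$ in the theorem comes from: $m\le 3\|W\|_F\cdot O(\sigma\sqrt n)=O(\sigma n^{3/2}/\epsilon)$. Without this target and this trade-off between band width (which controls $\|W\|_F$) and entrywise closeness of $P_\epsilon(J)$ to $I$, your argument does not give the stated bound.
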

Before embarking on the proof, we remark that together with the negative result of \cite{sushil25} recalled in \eqref{eq:birkhoff}, Theorem \ref{thm:relax} locates the threshold at which $\|X^\star-\Pi^\star\|_F^2$ goes from $o(n)$ to $\Omega(n)$ at $\sigma=n^{-1/2}$\rev{, up to sub-polynomial factors}. The above result guarantees that the output of the post-processing $(\hat{\pi})$  defined in \eqref{eq:post} correctly recovers all but a vanishing fraction of the entries of $\pi^\star$ \rev{whenever $\sigma$ satisfies \eqref{eq:bound_on_sigma}.} Formally, define overlap $\hbox{ov}(\hat \pi,\pi^\star)$ as the fraction of indices mapped identically by both permutations, i.e.,
\begin{align*}
    \hbox{ov}(\hat \pi,\pi^\star):=\frac{1}{n}\sum_{i\in[n]}\mathbbm{1}_{\pi^\star(i)=\hat\pi(i)}.
\end{align*}
Then, $\hbox{ov}(\hat \pi,\pi^\star)=1-o(1)$ w.h.p. \rev{Indeed, fix any $i$ with $\hat\pi(i)\ne \pi^\star(i)$. Since $\hat\pi(i)\in\Argmax_{j\in[n]}X^\star_{i,j}$, we have $X^\star_{i,\hat\pi(i)}\ge X^\star_{i,\pi^\star(i)}$, so either $X^\star_{i,\pi^\star(i)}\le 1/2$ or $X^\star_{i,\hat\pi(i)}\ge X^\star_{i,\pi^\star(i)}> 1/2$. In the first case the term $(X^\star_{i,\pi^\star(i)}-1)^2\ge 1/4$; in the second the off-diagonal term $(X^\star_{i,\hat\pi(i)})^2> 1/4$ (recall $\hat\pi(i)\ne\pi^\star(i)$). In either case, row $i$ contributes at least $1/4$ to $\|X^\star-\Pi^\star\|_F^2$, so}
$$
\|X^\star-\Pi^\star\|^2_F\ge \frac{1}{4}\sum_{i\in [n]}\mathbbm{1}_{\hat\pi(i)\ne\pi^\star(i)},
$$
hence the lower bound $1-o(1)$ on the overlap when $\|X^\star-\Pi^\star\|^2_F=o(n)$.
We further remark that post-processing \eqref{eq:post} is very crude, and that a more refined post-processing could be proposed. In particular one could let $\hat\pi$ be chosen as the permutation in $\cS_n$ that solves
\begin{equation}\label{eq:postprocess}
\max_{\pi\in \cS_n}\sum_{i\in [n]}X^\star_{i \pi(i)}.
\end{equation}
This is known as the Linear Assignment Problem (LAP), or maximum weight bipartite matching problem. Due to \cite[Corollary 3]{sushil25}, we are guaranteed that the overlap of the output of the LAP $\hat{\pi}$ with $\pi^\star$ is at least $1-o(1)$ w.h.p. whenever $\sigma$ satisfies \eqref{eq:bound_on_sigma}, in light of Theorem~\ref{thm:relax}. Empirically LAP appears to achieve overlap $1-o(1)$ for $\sigma$ well above $n^{-1/2}$, but there is so far no theoretical understanding of LAP's performance in the present setup.
\rev{In fact it is not possible to tighten the Frobenius guarantees since \cite{sushil25} show that $\|X^\star - \Pi^\star\|_F^2 = \Omega(n)$ for $\sigma$ larger than $n^{-1/2}$. This gap suggests that the property governing recovery is not Frobenius proximity but a closely related one: diagonal dominance of the minimizer, i.e., $X^\star_{i\,\pi^\star(i)} > X^\star_{ij}$ for all $j \neq \pi^\star(i)$. Either property makes the post-processing above succeed; the difference is their reach, since the relaxations remain \emph{empirically} diagonally dominant---and hence keep recovering $\pi^\star$---for substantially larger $\sigma$ than the Frobenius guarantee allows. Our analysis controls the unregularized minimizer in Frobenius norm but says nothing about its diagonal dominance, and hence does not explain this empirical success for larger $\sigma$. Relatedly, \cite{fan2023spectral} \emph{do} establish diagonal dominance, though only for their \emph{regularized} minimizer (up to $\sigma = O(1/\log n)$), where it enables recovery by rounding. Establishing it for the \emph{unregularized} minimizer thus remains an important open problem.}

\rev{Our contribution is to pin down a \emph{structural} phase transition for the Birkhoff and hypercube relaxations: the point (up to sub-polynomial factors) at which the minimizer transitions from $o(n)$ to $\Omega(n)$ squared Frobenius distance from $\Pi^\star$. While this threshold does not coincide with the information-theoretic \citep{ganassali2022sharp, wu2022settling} or algorithmic frontiers \citep{fan2023spectral}, which are shown to be much larger than $\Theta(n^{-1/2})$, it is an important changepoint in the behavior of the unregularized Birkhoff relaxation, above which different proof techniques appear to be required to establish any form of correctness guarantee. Since understanding the (most natural) Birkhoff relaxation is known to be difficult---\cite{fan2023spectral} describe it as ``a challenging task yet to be accomplished''---we believe that precisely characterizing its different operating regimes is of independent interest. Moreover, our techniques (upcoming Lemmas~\ref{lem:relax} and~\ref{lemma:2}) are not regime-specific and may be useful at larger noise levels.}
\section{Proof of Theorem~\ref{thm:relax}}
Thanks to invariance of Frobenius norm $\|\cdot\|_F$ and of both $\cB_n$ and $\cH_n$ by left- or right-multiplication by a permutation matrix, we can assume without loss of generality that $\pi^\star$ is the identity. We thus aim to prove
\begin{equation}
\|X^\star-I\|_F^2=o(n), \label{eq:close_to_I}
\end{equation}
where $X^\star$ minimizes
\begin{equation}\label{eq:obj}
\|AX -X(A+\sigma Z)\|_F
\end{equation}
over  $X\in \cC_n$ where $\cC_n$  could be either $\cH_n$ or $\cB_n$. To ensure \eqref{eq:close_to_I}, it is sufficient to show that $\Tr(X^\star)$ is at least $n(1-o(1))$ w.h.p. In particular, as $X^\star \in \cH_n$, we have
\begin{align}
    \|X^\star-I\|_F^2 = \sum_{i,j \in [n]: i \neq j} (X^\star_{i,j})^2 + \sum_{i=1}^n (X^\star_{i, i}-1)^2 = \|X^\star\|_F^2 + n - 2\Tr(X^\star) \leq 2 \left(n - \Tr(X^\star)\right), \label{eq:trace_bound}
\end{align}
where the last inequality follows as $X^\star \in \cH_n$, which implies $\|X^\star\|_F \leq \sqrt{n}$. The above inequality ascertains that $\Tr(X^\star) = n(1-o(1))$ implies $\|X^\star-I\|_F^2 = o(n)$. So, we focus on establishing the former. Another key ingredient is the following upper bound on the objective \eqref{eq:obj}:
\begin{align}
    \|AX^\star-X^\star B\|_F \leq \|A-B\|_F = \sigma \|Z\|_F \overset{(*)}{=} O(\sigma \sqrt{n}) 
    , \label{eq:bound_on_obj}
\end{align}
where $(*)$ holds as $\|Z\|_F \leq c \sqrt{n}$ w.p. $1-o(1)$ for $c>0$ large enough.
We now present two different proofs for the assertion that \eqref{eq:bound_on_obj} implies $\Tr(X^\star)$ is close to $n$.
\subsection{Direct (Primal) Approach}
In this approach, we establish the following lower bound on the objective of \eqref{eq:obj}:
$$
\forall X\in \cC_n,\; \|AX-XB\|_F\ge \frac{1}{O(n\log^4(n))}(n- \Tr(X)),
$$
which then combined with \eqref{eq:bound_on_obj} and \eqref{eq:bound_on_sigma} implies $\Tr(X^\star) \geq n(1-o(1))$. We present the complete proof below.
\begin{proof}[Proof of Theorem~\ref{thm:relax}] The proof is mainly divided into the following four parts.

\noindent \underline{\emph{Isolating dependence on $B$:}} The objective \eqref{eq:obj} admits the lower bounds
$$
\|AX -X(A+\sigma Z)\|_F\ge \|AX-XA\|_F-\sigma\|XZ\|_F\ge \|AX-XA\|_F-\sigma \|X\|_F \|Z\|_2,
$$
where $\|Z\|_2$ denotes the operator norm of matrix $Z$, and we used the classical inequality $\|XZ\|_F\le \|Z\|_2\|X\|_F$, valid for any two matrices $X,Z$.  By classical results on spectra of random matrices \cite{anderson}, with high probability matrix $Z$ verifies $\|Z\|_2\le 2+o(1)=O(1)$. Also, any matrix $X$ in $\cC_n$ verifies $\|X\|_F\le \sqrt{n}$, so that
\begin{equation}\label{eq:tmp_tmp}
\|AX -X(A+\sigma Z)\|_F\ge \|AX-XA\|_F - 3 \sigma \sqrt{n}.
\end{equation}
\underline{\emph{Lower bound by a linear function:}} Let $\lambda_1\le \cdots\le \lambda_n$ denote the spectrum of $A$, and $u_1,\ldots,u_n$ an associated orthonormal basis of eigenvectors of $A$. The matrix $(u_1|\cdots|u_n)$ is uniformly distributed over the orthogonal group $O_n$. \rev{The goal of this step is to lower bound $\|AX-XA\|_F^2$ in terms of $n-\Tr(X)$, so that the objective is large whenever $X$ is far from $I$. The natural approach---bounding by the smallest eigenvalue of $A\otimes I - I\otimes A$, namely $\min_{i,j}(\lambda_i-\lambda_j)^2$---fails, since $\lambda_i-\lambda_j$ can be very small when $|i-j|$ is small. We therefore project $X-I$ onto the eigenvector pairs $u_iu_j^\top$ with $|i-j|\ge \epsilon n$, so that the minimum eigenvalue gap restricted to this projected subspace can be controlled. More formally,} let $\epsilon>0$ be chosen, possibly dependent on $n$. Let
\begin{equation}\label{eq:def_c_epsilon}
C_\epsilon:=\min_{i,j\in[n]:|i-j|\ge \epsilon n} (\lambda_i-\lambda_j)^2.
\end{equation}
Denote by $P_{\epsilon}$ the orthonormal projection defined by
\begin{equation}
P_{\epsilon}(M):=\sum_{i,j:|i-j|<\epsilon n}\langle u_i u_j^\top,M\rangle u_i u_j^\top.
\end{equation}
\rev{Write then
\begin{equation}\label{eq:array_bound}
\begin{array}{ll}
\|AX-XA\|_F^2 &=\|A(X-I)-(X-I)A\|_F^2\\
&=\sum_{i,j\in [n]}(\lambda_i-\lambda_j)^2\langle u_i u_j^\top, X-I\rangle^2\\
&\ge C_\epsilon \|(I-P_\epsilon)(X-I)\|^2_F\\
&\overset{(a)}{\ge} C_\epsilon \frac{\langle (I-P_\epsilon)(J),X-I\rangle^2}{\|J\|_F^2}\\
&\overset{(b)}{=}C_\epsilon n^{-2}\langle (I-P_\epsilon)(J),X-I\rangle^2 \\
&\overset{(c)}{=} C_\epsilon n^{-2}\langle P_\epsilon(J),I-X\rangle^2,
\end{array}
\end{equation}
where we introduced the all-ones matrix $J$ and used Cauchy-Schwarz inequality in $(a)$, used $\|J\|_F=n$ in $(b)$, and used $\langle J, X \rangle = \langle J, I \rangle = n$ in $(c)$.}
\rev{For the lower bound \eqref{eq:array_bound} to reflect the trace deficiency $n-\Tr(X)$, we need $Y:=P_\epsilon(J)$ to be close to $I$. This is governed by $\epsilon$, since $Y$ interpolates between $J$ for large $\epsilon$ and $I$ for small $\epsilon$. To see this, write $J=ee^\top=\sum_{i,j\in[n]}(u_i^\top e)(u_j^\top e)\,u_iu_j^\top$. At $\epsilon=1$, $Y$ sums over all pairs and equals the matrix $J$; at $\epsilon=1/n$, $Y$ retains only the diagonal terms $\sum_i(u_i^\top e)^2\,u_iu_i^\top$, which can be shown to be component-wise close to $I=\sum_i u_iu_i^\top$ since the eigenvectors $u_i$ are delocalized (being uniform on $\cS^{n-1}$). We thus want $\epsilon$ small enough that $Y\approx I$, yet large enough that the gap $C_\epsilon=\min_{|i-j|\ge\epsilon n}(\lambda_i-\lambda_j)^2$ does not vanish; the next step balances this trade-off.}

\noindent \underline{\emph{Analysis at extreme points of $\cC_n$:}} The extremal elements of $\cC_n$ are its elements whose entries belong to $\{0,1\}$. For some $\cC \subset [n]^2$ of size $n$, we denote by $X_\cC$ the extremal element of $\cC_n$ whose support is $\cC$ (if it exists).
In addition, we denote by $\cC_1$ the set of pairs $(i,j)$ in $\cC$ with $i\ne j$, and by $\cC_2$ the set of indices $k$ such that $(k,k)$ is not in $\cC$.
We can write
$$
X_\cC=I+ \sum_{(i,j)\in \cC_1}e_i e_j^\top-\sum_{k\in \cC_2}e_k e_k^\top.
$$
Note that
$$
|\cC_1|=|\cC_2|=n-\Tr X_\cC.
$$
\rev{Write then
\begin{align*}
\left\langle P_\epsilon(J),I-X_\cC\right\rangle&=
\displaystyle
\left\langle P_\epsilon(J),\sum_{k\in \cC_2}e_k e_k^\top-\sum_{(i,j)\in \cC_1}e_i e_j^\top\right\rangle\\
&=
\displaystyle \sum_{k\in \cC_2}\left\langle P_\epsilon (J), e_k e_k^\top\right\rangle
-\sum_{(i,j)\in \cC_1}\left\langle P_\epsilon (J), e_i e_j^\top\right\rangle
\end{align*}}
Let now $Y:=P_\epsilon (J)$.
The previous display is then written as
\begin{equation}\label{eq:lower_bound_zz}
\langle P_\epsilon(J),I-X_\cC\rangle= \sum_{k\in \cC_2}Y_{kk}-\sum_{(i,j)\in \cC_1} Y_{ij}.
\end{equation}
A key step to conclude our argument is the following
\begin{lemma}\label{lem:relax}
Choose $\epsilon=\kappa/\ln^4(n)$, where $\kappa>0$ is a sufficiently small constant, of order $\Omega(1)$. Then with high probability one has:
\begin{equation}\label{eq:Zii}
\forall i\in [n],\; Y_{ii}=1+o(1),
\end{equation}
and
\begin{equation}\label{eq:Zij}
\forall i\ne j\in [n],\; |Y_{ij}|\le \frac{1}{2}\cdot
\end{equation}
\end{lemma}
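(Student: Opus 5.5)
The plan is to write $Y$ as a bilinear form in the columns of the Haar matrix $U=(u_1|\cdots|u_n)$, compute its mean, and then control fluctuations with a high-moment (hypercontractivity) bound sharp enough to survive a union bound over the $n^2$ pairs $(i,j)$. Set $a:=U^\top e$, so that $a_k=u_k^\top e$, and let $M_{kl}:=\mathbbm{1}_{|k-l|<\epsilon n}$. Then
\begin{equation*}
Y_{ij}=\sum_{k,l}M_{kl}\,a_kU_{ik}\,a_lU_{jl}=n\,(x\circ y)^\top M\,(z\circ y),
\end{equation*}
where $x:=U^\top e_i$, $z:=U^\top e_j$ and $y:=a/\sqrt n$ are unit vectors. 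Since $U^\top$ is Haar, the triple $(x,z,y)$ is the image under a uniform rotation of the fixed vectors $e_i,e_j,e/\sqrt n$. Their Gram matrix is explicit: $\langle x,y\rangle=\langle z,y\rangle=n^{-1/2}$ and $\langle x,z\rangle=\delta_{ij}$. We may therefore realise them by Gram--Schmidt applied to three i.i.d.\ standard Gaussian vectors $g_1,g_2,g_3\in\bbR^n$, namely $y=g_1/\|g_1\|$, $x=n^{-1/2}y+\sqrt{1-1/n}\,w$, and so on. The normalisations and inner-product corrections are $1+O(\sqrt{\ln n/n})$ w.h.p. I would first show that these corrections change $Y_{ij}$ only by $o(1)$, using crude bounds such as $\|M\|_2\le 2\epsilon n$ and $\|x\circ y\|_\infty=O(\ln n/n)$. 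This reduces the problem to the same bilinear form with $x,y,z$ replaced by independent (or explicitly correlated) Gaussian vectors scaled by $n^{-1/2}$.

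\emph{Mean.} In the Gaussian model, the diagonal terms $k=l$ give
\begin{equation*}
n\sum_k x_k^2y_k^2\approx n\cdot n\cdot n^{-2}=1,
\end{equation*}
and only for $i=j$. The off-diagonal terms $k\neq l$ with $|k-l|<\epsilon n$ number about $2\epsilon n^2$. Each has mean $n\,\dE[x_ky_k]\,\dE[z_ly_l]=O(n\cdot n^{-3/2}\cdot n^{-3/2})$, because the only correlation between $x$ (or $z$) and $y$ is the $n^{-1/2}$ component along $y$. Summed, they contribute $O(\epsilon)$. Hence
\begin{equation*}
\dE Y_{ii}=1+O(\epsilon)\quad\text{and}\quad \dE Y_{ij}=O(\epsilon)\ \text{for } i\ne j,
\end{equation*}
and both are consistent with \eqref{eq:Zii}--\eqref{eq:Zij} once $\epsilon=o(1)$.

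\emph{Fluctuations.} After the Gaussian reduction, $Y_{ij}-\dE Y_{ij}$ is a polynomial of degree $4$ in the independent Gaussian coordinates of $g_1,g_2,g_3$. Its variance is computed by a direct Wick expansion. The banded off-diagonal part has about $\epsilon n^2$ essentially uncorrelated terms, each of size $n\cdot n^{-2}$, giving variance $O(\epsilon)$. The diagonal part has $n$ terms of size $n\cdot n^{-2}$, giving variance $O(1/n)$. So $\mathrm{Var}(Y_{ij})=O(\epsilon)$.

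Gaussian hypercontractivity for degree-$4$ chaoses gives $\|Y_{ij}-\dE Y_{ij}\|_p\le (p-1)^2\,O(\sqrt\epsilon)$. By Markov's inequality with $p=\lceil 3\ln n\rceil$,
\begin{equation*}
\dP\bigl(|Y_{ij}-\dE Y_{ij}|>t\bigr)\le \bigl(C\ln^2(n)\sqrt{\epsilon}/t\bigr)^{p}.
\end{equation*}
With $\epsilon=\kappa/\ln^4 n$ and $\kappa$ small, this is at most $e^{-p}\le n^{-3}$ for $t=1/4$. A union bound over the $n^2$ pairs then gives $|Y_{ij}|\le 1/4+O(\epsilon)\le 1/2$ for all $i\neq j$. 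This step is exactly where the $\ln^4 n$ in $\epsilon$ comes from.

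For \eqref{eq:Zii} we need $o(1)$ rather than a constant, so I would take $t=t_n\to0$ slowly, for instance with $\kappa$ replaced by $\kappa_n\to0$. Alternatively, I would refine the diagonal case: there $x$ and $z$ coincide, so the off-diagonal variance is still $O(\epsilon)$, and one can use $p=\ln n$ together with $t=C\ln^2(n)\sqrt\epsilon\cdot e$, which is $o(1)$ if $\kappa$ is allowed to tend to $0$ arbitrarily slowly. I expect the main obstacle to be the honest Gaussian reduction. One must show that the Gram--Schmidt and normalisation corrections, as well as the weak dependence of $x$ and $z$ on $y$, do not create terms whose tails beat the $n^{-2}$ union bound. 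I would handle this by conditioning on the w.h.p.\ event that all norms and inner products of $g_1,g_2,g_3$ are within $O(\sqrt{\ln n/n})$ of their means. On that event, the expansion of $Y_{ij}$ differs from the pure Gaussian chaos by terms that are bounded deterministically, using $\|M\|_2=O(\epsilon n)$ and delocalisation bounds $\|g_r\|_\infty=O(\sqrt{\ln n})$.
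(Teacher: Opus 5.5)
There is a genuine gap in your Gaussian reduction step. Your architecture matches the paper's: realise $(Ve_i,Ve_j,Ve)$ by Gram--Schmidt on i.i.d.\ Gaussians, then use concentration and a union bound over pairs. Replacing the paper's term-by-term bounds by a Wick variance computation plus degree-$4$ hypercontractivity is fine for the pure Gaussian polynomial; it reproduces the paper's dominant scale $\sqrt{\epsilon}\ln^2 n$ and hence the $\ln^4 n$.

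The failing step is bounding the Gram--Schmidt corrections \emph{deterministically} via $\|M\|_2\le 2\epsilon n$ and sup-norm delocalisation. Take $i\ne j$ and the projection in $w=(g_2-ry)/\|g_2-ry\|$, where $r=\langle g_2,y\rangle=O(\sqrt{\ln n})$. It contributes to $Y_{ij}=n(x\circ y)^\top M(z\circ y)$ a term of order
\begin{equation*}
\frac{r}{n^{3/2}}\,(g_1^{\circ 2})^\top M\,(g_1\circ g_3).
\end{equation*}
The bound $\|M\|_2\|g_1^{\circ 2}\|_2\|g_1\circ g_3\|_2$ gives at best $O(\epsilon\sqrt{n\ln n})$, which diverges for $\epsilon=\kappa/\ln^4 n$. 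The operator norm is wasteful here because $g_1^{\circ 2}\approx e$ lies essentially on the top eigendirection of $M$, while $g_1\circ g_3$ is centred. In truth, conditionally on $(g_1,g_2)$ this bilinear form is a centred Gaussian with standard deviation about $2\epsilon n^{3/2}$, so the term is only $O(\epsilon\ln n)$.

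The remedy is to write $Y_{ij}$ exactly as a finite sum of random scalar prefactors, bounded on your good event, times \emph{fixed} Gaussian polynomials. Then apply a probabilistic bound to each polynomial separately (hypercontractivity, conditional Gaussian tails, or Hanson--Wright) before taking the union bound. This is exactly what the paper does: it uses the expansion \eqref{eq:y_as_function_of_x} with prefactors $\theta=O(\sqrt{\ln n/n})$ and controls each sum $S(\cdot)$ separately in Appendix~\ref{app:ineqj}.

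Second, your argument does not give \eqref{eq:Zii} as stated at fixed $\kappa$. With $p\asymp\ln n$, degree-$4$ hypercontractivity only yields $|Y_{ii}-\dE Y_{ii}|=O(\ln^2 n\sqrt{\epsilon})=O(\sqrt{\kappa})$. Letting $\kappa=\kappa_n\to 0$ changes the lemma. That is harmless for the primal proof of Theorem~\ref{thm:relax}, which only uses $\min_i Y_{ii}\ge 5/6$, but it is not what the lemma asserts. The loss comes from the generic $(p-1)^2$ factor. For $i=j$, the leading off-diagonal part is $\frac1n\sum_{k\ne l}M_{kl}h_kh_l$ with $h=g_1\circ g_2$, whose entries are i.i.d.\ centred and sub-exponential. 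A Hanson--Wright bound, as the paper uses, has tail $\exp(-c\min(t^2/\|M\|_F^2,\sqrt{t/\|M\|_2}))$ with $\|M\|_F^2\le 2\epsilon n^2$. It gives fluctuations $O(\sqrt{\epsilon\ln n}+\epsilon\ln^2 n)$, which is $o(1)$ at fixed $\kappa$.
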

The proof of Lemma~\ref{lem:relax} is presented in Section~\ref{sec:lemrelax}.
\rev{Intuitively, the choice $\epsilon=\kappa/\ln^4(n)$ is essentially the largest band for which $Y=P_\epsilon(J)$ remains component-wise close to $I$. In particular, recall $Y$ is close to $I$ for small $\epsilon$ and close to $J$ for large $\epsilon$: Lemma~\ref{lem:relax} certifies that $\epsilon=\kappa/\ln^4(n) = o(1)$ is still on the $Y\approx I$ side. Essentially, we take $\epsilon$ as large as possible, while ensuring $Y\approx I$, because a larger band yields a larger eigenvalue gap $C_\epsilon$; we show later (Lemma~\ref{lemma:2}) that this choice gives $C_\epsilon=\tilde\Omega(1)$, keeping the prefactor in the resulting lower bound from vanishing.}
Plugging the above lemma's result in \eqref{eq:lower_bound_zz} we get
\begin{align*}
\langle P_\epsilon(J),I-X_\cC\rangle\ge(1-o(1))|\cC_2|-\frac{1}{2}|\cC_1|\ge \frac{1}{3}|\cC_2|=\frac{1}{3}(n-\Tr(X_\cC)).
\end{align*}
\underline{\emph{Concluding:}} For each $X\in \cC_{n}$, writing $X$ as a convex combination of extremal points of $\cC_n$ of the form $X_{\cC}$ as defined above, one readily obtains
$$
\forall X\in \cC_{n}, \; \langle P_\epsilon(J),I-X\rangle\ge \frac{1}{3}(n-\Tr(X)).
$$
Combined with \eqref{eq:array_bound}, this yields
$$
\forall X\in \cC_n,\; \|AX-XA\|_F\ge \frac{\sqrt{C_\epsilon}}{n}\frac{1}{3}(n- \Tr(X)).
$$
This holds in particular for the optimum $X^\star$. Its optimality ensures that it achieves a lower objective value than the identity matrix $I$, for which $\|A\times I -I(A+\sigma Z)\|_F=\sigma\|Z\|_F=O(\sigma \sqrt{n})$, so that by \eqref{eq:tmp_tmp} and the previous display:
$$
O(\sigma \sqrt{n})\ge \|AX^\star -X^\star(A+\sigma Z)\|_F\ge \frac{\sqrt{C_\epsilon}}{3}\left[1-n^{-1}\Tr X^\star\right] - 3 \sigma\sqrt{n}.
$$
Thus
$$
\Tr X^\star\ge n-O(\sigma n^{3/2}/\sqrt{C_\epsilon}).
$$
The following lemma will then allow us to conclude:
\begin{lemma}\label{lemma:2}
Assume that for some fixed constant $\delta>0$ one has
\begin{equation}\label{eq:constraint_epsilon}
\epsilon\ge n^{\delta-2/3}.
\end{equation}
Then with high probability
\begin{equation}
C_\epsilon=\Omega(\epsilon^2),
\end{equation}
where $C_\epsilon$ is defined in \eqref{eq:def_c_epsilon}.
\end{lemma}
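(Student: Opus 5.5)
We prove Lemma~\ref{lemma:2} by comparing the ordered eigenvalues of $A$ with the classical locations of the semicircle law, using eigenvalue rigidity for GOE matrices. Let $\rho_{sc}(x)=\frac{1}{2\pi}\sqrt{(4-x^2)_+}$. Define $\gamma_k$ by $\int_{-2}^{\gamma_k}\rho_{sc}=k/n$. Rigidity (Erd\H{o}s--Yau--Yin) states that for any fixed $\delta'>0$, with high probability,
\begin{equation*}
\forall k\in[n],\quad |\lambda_k-\gamma_k|\le n^{-2/3+\delta'}\,\min(k,n+1-k)^{-1/3}.
\end{equation*}
In particular $|\lambda_k-\gamma_k|\le n^{-2/3+\delta'}$ uniformly in $k$. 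We take this as the probabilistic input; it is a classical result in the spirit of \cite{anderson}.

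The deterministic step is to lower bound $\gamma_j-\gamma_i$ for $j-i\ge\epsilon n$. Write $F(x)=\int_{-2}^x\rho_{sc}$. Near an edge, $1-F(2-t)\asymp t^{3/2}$, while in the bulk $\rho_{sc}$ is bounded above by $1/\pi$. Since $F(\gamma_j)-F(\gamma_i)\ge\epsilon$ and $\rho_{sc}\le 1/\pi$ everywhere, we get $\gamma_j-\gamma_i\ge \pi\epsilon$. This crude bound already gives the $\Omega(\epsilon)$ gap, with no need to treat the edges separately, because the density is bounded everywhere. Thus $\min_{j-i\ge\epsilon n}(\gamma_j-\gamma_i)\ge\pi\epsilon$.

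We then combine the two steps. Since $\lambda$ is sorted, $\lambda_j-\lambda_i\ge \gamma_j-\gamma_i-2n^{-2/3+\delta'}\ge \pi\epsilon-2n^{-2/3+\delta'}$. Choosing $\delta'=\delta/2$ and using \eqref{eq:constraint_epsilon} gives $n^{-2/3+\delta/2}\le \epsilon n^{-\delta/2}=o(\epsilon)$. Hence $\lambda_j-\lambda_i\ge(\pi-o(1))\epsilon$ for all $|i-j|\ge\epsilon n$, and squaring yields $C_\epsilon=\Omega(\epsilon^2)$. This is where the threshold $n^{\delta-2/3}$ enters: it is exactly the scale of rigidity fluctuations at the spectral edge.

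The main obstacle is justifying the rigidity input with the right normalization: our $A$ has off-diagonal variance $1/n$ and diagonal variance $2/n$, which is exactly the GOE scaling for which the optimal rigidity is stated. If one wants to avoid the full optimal rigidity theorem, a fallback is to prove only a uniform bound $|\lambda_k-\gamma_k|\le n^{-2/3+\delta'}$. This would proceed via the local semicircle law for the Stieltjes transform down to scale $\eta=n^{-1+\delta'}$, plus the standard Helffer--Sj\"ostrand counting argument. It yields counting-function error $|\#\{k:\lambda_k\le x\}-nF(x)|\le n^{\delta'}$, which converts to location errors of order $n^{-2/3+\delta'}$ at the edges and $n^{-1+\delta'}$ in the bulk. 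We would cite this rather than reprove it.
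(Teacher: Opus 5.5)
Your proposal is correct and follows essentially the same route as the paper's proof. Both use Erd\H{o}s--Yau--Yin rigidity around the semicircle classical locations, the bound $\rho_{sc}\le 1/\pi$ to get $\gamma_j-\gamma_i\ge \pi(j-i)/n$, and the assumption $\epsilon\ge n^{\delta-2/3}$ to absorb the $n^{-2/3+o(1)}$ rigidity error. The only difference is cosmetic: you write the rigidity error as $n^{-2/3+\delta'}$, while the paper writes it as $(\ln n)^{C\ln\ln n}n^{-2/3}$.
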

Our choice $\epsilon=\Omega(1)/\ln^4(n)$ clearly satisfies Condition \eqref{eq:constraint_epsilon}. Lemma \ref{lemma:2} thus ensures that $C_\epsilon=\Omega(\epsilon^2)$. The proof of Lemma~\ref{lemma:2} is presented in Appendix~\ref{app:lemma2}. \rev{The intuition behind Lemma~\ref{lemma:2} is that GOE eigenvalues have typical spacing $\lambda_{i+1}-\lambda_i\sim 1/n$, so two eigenvalues whose indices differ by at least $\epsilon n$ are separated by roughly $\epsilon$; squaring gives $C_\epsilon=\Omega(\epsilon^2)$.}
\end{proof}
\subsection{Dual Approach}
In this section, we prove the following result:
\begin{proposition}\label{prop:relax}
For correlated GOE matrices, if the noise parameter $\sigma$ satisfies
\begin{equation}
\sigma=o\left( \frac{n^{-1/2}}{\ln(n)^{5.5}}\right), \label{eq:relaxed_bound_on_sigma}
\end{equation}
one then has with high probability, for relaxation to the Birkhoff polytope $\cB_n$, constrained hypercube $\cH_n$, and simplex $\Delta_n$:
\begin{equation}
\Tr(X^\star) \geq n(1-o(1)).
\end{equation}
\end{proposition}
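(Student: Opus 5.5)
Since $I$ belongs to each of $\cB_n\subset\cH_n\subset\Delta_n$, every minimizer satisfies $\|AX^\star-X^\star B\|_F\le\sigma\|Z\|_F=O(\sigma\sqrt n)$ by \eqref{eq:bound_on_obj}. It therefore suffices to show that, w.h.p., every $X\in\Delta_n$ with this objective value has $\Tr X\ge n(1-o(1))$. I would obtain this inequality from a \emph{dual certificate} for the linear constraints $X\ge 0$, $\langle J,X\rangle=n$, instead of using extreme points as in the primal approach.

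\textbf{Certificate.} I look for a matrix $W$ and a scalar $\mu=1-o(1)$ such that
\[
D:=I-\mu J+(AW-WA)\ \text{ is entrywise nonnegative.}
\]
For $X\in\Delta_n$ we have $\langle D,X\rangle\ge 0$, and $A$ is symmetric, so $\langle AW-WA,X\rangle=\langle W,AX-XA\rangle$ up to sign. This gives
\[
\Tr X\ \ge\ \mu n-|\langle W,AX-XA\rangle|.
\]
I would build $W$ in the eigenbasis of $A$, following the primal approach:
\[
W:=\sum_{|i-j|\ge\epsilon n}\frac{\langle u_iu_j^\top,J\rangle}{\lambda_i-\lambda_j}\,u_iu_j^\top .
\]
Then $AW-WA=\pm(I-P_\epsilon)(J)=\pm(J-Y)$ with $Y=P_\epsilon(J)$, and I choose the sign so that $D=I-\mu J+J-Y$.
\begin{itemize}
\item The diagonal entries are $D_{ii}=2-\mu-Y_{ii}\ge 0$ by \eqref{eq:Zii}.
\item The off-diagonal entries are $D_{ij}=1-\mu-Y_{ij}$, so I need $1-\mu\ge\max_{i\ne j}|Y_{ij}|$.
\end{itemize}
Unlike the primal approach, the bound $|Y_{ij}|\le 1/2$ of \eqref{eq:Zij} is not enough here: I need $\max_{i\neq j}|Y_{ij}|=o(1)$. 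I would obtain this by redoing the proof of Lemma~\ref{lem:relax} with a slightly smaller band, say $\epsilon=\kappa/\ln^{4.5}(n)$, tracking the delocalization bounds to get an $o(1)$ off-diagonal error. This explains the worse exponent $5.5$ in \eqref{eq:relaxed_bound_on_sigma}.

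\textbf{Norm of $W$.} By Lemma~\ref{lemma:2} (the condition \eqref{eq:constraint_epsilon} holds for polylogarithmic $\epsilon$), every retained gap satisfies $|\lambda_i-\lambda_j|\ge\sqrt{C_\epsilon}=\Omega(\epsilon)$. Hence $\|W\|_F\le\|J\|_F/\sqrt{C_\epsilon}=O(n/\epsilon)$.

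\textbf{Handling the noise.} I write $AX-XA=(AX-XB)+\sigma XZ$. The first part contributes at most $\|W\|_F\cdot O(\sigma\sqrt n)=O(\sigma n^{3/2}/\epsilon)$. The second part is the main obstacle for the simplex: there $\|X\|_F$ can be as large as $n$, so the primal bound $\sigma\|X\|_F\|Z\|_2$ is too weak. Instead I would write $\sigma\langle W,XZ\rangle=\sigma\langle WZ,X\rangle$ and use $X\ge0$ with $\langle J,X\rangle=n$ to bound it by $\sigma n\max_{k,l}|(WZ)_{kl}|$.

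The key step is then an entrywise bound on $WZ$. Since $Z$ is independent of $A$, each entry $(WZ)_{kl}$ is, conditionally on $A$, Gaussian with variance $\|W_{k\cdot}\|_2^2/n$. So I need row norms of $W$ of order $\sqrt n\,\mathrm{polylog}/\epsilon$, which I would derive from delocalization of the $u_i$ (uniform on the sphere, as in Lemma~\ref{lem:relax}) together with the gap bound. A union bound over $n^2$ entries then costs another $\sqrt{\ln n}$. The result is $\sigma n\max_{k,l}|(WZ)_{kl}|=O(\sigma n^{3/2}\,\mathrm{polylog}(n))$.

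\textbf{Conclusion.} Collecting the terms,
\[
\Tr X\ge n(1-o(1))-O\!\left(\sigma n^{3/2}\ln(n)^{5.5}\right)=n(1-o(1))
\]
under \eqref{eq:relaxed_bound_on_sigma}. I expect the entrywise control of $Y$ and of $WZ$ to be the delicate part; the polylogarithmic bookkeeping may force a slightly different choice of $\epsilon$ than the one guessed above.
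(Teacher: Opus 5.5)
Your argument is correct and is essentially the paper's dual certificate written out directly. Your $W$ is the paper's $\tilde\mu\tilde M$, and your $D$, with the term $-\sigma WZ$ absorbed, is its multiplier $R$. The noise is handled by the same bound $n\max_{k,l}|(WZ)_{kl}|$, using $X\ge 0$, $\langle J,X\rangle=n$ and the independence of $Z$ from $A$.

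Three of your side claims are off, though none of them breaks the proof.

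\emph{The bound $|Y_{ij}|\le 1/2$ is in fact enough.} The paper puts a factor $2$ in front of the commutator: $I-\mu J+2(J-Y)-2\sigma WZ\approx J-I$, which has slack $1$ off the diagonal. Your normalization gives $D\approx 0$ everywhere. That lack of slack is the only reason you need $\max_{i\ne j}Y_{ij}=o(1)$ and a smaller band. The price of the factor $2$ is just a constant in the error terms, so Lemma~\ref{lem:relax} can be used as stated with $\epsilon=\kappa/\ln^4 n$.

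\emph{No row-norm estimate is needed.} The bound $\|W_{k\cdot}\|_2\le\|W\|_F\le n/\sqrt{C_\epsilon}$ already gives $\sigma n\max_{k,l}|(WZ)_{kl}|=O(\sigma n^{3/2}\sqrt{\ln n}/\epsilon)$, which is the bound you actually state. Rows of size $\sqrt n\,\mathrm{polylog}/\epsilon$ would instead give $O(\sigma n\,\mathrm{polylog})$. Such row bounds would also need cancellations in $\sum_i (e^\top u_i)u_i(k)/(\lambda_i-\lambda_j)$, which termwise delocalization and gap bounds do not provide.

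\emph{Your Parseval bound is sharper than the paper's, and the exponent comes from elsewhere.} The bound $\|W\|_F\le \|J\|_F/\sqrt{C_\epsilon}$ beats the paper's $O(n\ln^5 n)$, which loses a $\ln n$ by bounding each $(e^\top u_i)^2$ by $O(\ln n)$. The paper's exponent $5.5=4+1+1/2$ comes from that loss plus the union bound on $WZ$, not from shrinking the band. With your bookkeeping, even at $\epsilon=\kappa/\ln^{4.5}n$, you get $O(\sigma n^{3/2}\ln^{5}n)$, which is within the stated hypothesis.
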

In light of \eqref{eq:trace_bound}, the above proposition implies the conclusion of Theorem~\ref{thm:relax} under Condition \eqref{eq:relaxed_bound_on_sigma}, a slight strengthening of Condition \eqref{eq:bound_on_sigma}  on $\sigma$. Note that the above proposition also ensures that $\Tr(X^\star) \geq n(1-o(1))$, even for the simplex relaxation, i.e., $\cC_n = \Delta_n$. However,
\rev{the reduction from $\Tr(X^\star)\ge n-o(n)$ to $\|X^\star-I\|_F^2=o(n)$ via \eqref{eq:trace_bound} requires $\|X^\star\|_F\le \sqrt{n}$, which may fail for $X^\star\in\Delta_n$. Proposition~\ref{prop:relax} still yields $\Tr(X^\star)\ge n-o(n)$ for the simplex; whether this can be strengthened to a Frobenius-norm guarantee remains open.}
\begin{proof}[Proof of Proposition~\ref{prop:relax}]
As before, let $A = \sum_{i=1}^n \lambda_i u_iu_i^T$, where $\{\lambda_i\}_{i=1}^n$ are the eigenvalues and $\{u_i\}_{i=1}^n$ are the corresponding orthonormal eigenvectors of $A$. The proof is mainly divided into the following four parts.

\noindent \underline{\emph{Trace minimization problem:}} Our objective is to lower bound $\Tr(X^\star)$ for $X^\star \in \Delta_n$ satisfying \eqref{eq:bound_on_obj}, which motivates the following optimization problem:
\begin{align}
    T^\star = \min_{X \in \bbR^{n \times n}} \text{Tr}(X) \quad \text{subject to} \sum_{i,j=1}^n X_{ij} = n, X \geq 0, \|AX-XB\|_F^2 \leq c^2\sigma^2 n. \label{eq:trace_min}
\end{align}
Note that the objective is a linear function, and the constraints define a convex set. Thus, the above optimization problem is convex. \rev{The norm constraint $\|AX-XB\|_F^2\le c^2\sigma^2 n$ encodes the (near-) optimality of $X^\star$ via \eqref{eq:bound_on_obj}, and is essential: without it, one could trivially construct dual solutions with objective value $0$.}

\noindent \underline{\emph{The Dual Problem:}} Introduce the dual variables $R \in \bbR^{n \times n}$ and $\tilde{\mu}, \mu \in \bbR$ corresponding to the non-negativity, the norm constraint, and the total sum constraint, respectively. Thus, by weak duality, we have
\begin{align*}
   T^\star \geq{}& \max_{R \geq 0, \mu, \tilde{\mu}} \min_{X} \text{Tr}(X) - \langle R, X \rangle - \mu \langle J, X \rangle + \tilde{\mu}\|AX-XB\|_F^2 - \tilde{\mu} c^2\sigma^2 n + \mu n \\
   ={}&\max_{R \geq 0, \mu, \tilde{\mu}} \min_{X}  \tilde{\mu}\|AX-XB\|_F^2 - \langle R+\mu J - I, X \rangle - \tilde{\mu} c^2\sigma^2 n + \mu n \\
   \overset{(a)}{=}{}& \max_{R \geq 0, \mu, \tilde{\mu}, M}   \tilde{\mu}\|AM-MB\|_F^2 - \langle R+\mu J - I, M \rangle - \tilde{\mu} c^2\sigma^2 n + \mu n \\
   &\text{subject to} \quad 2\tilde{\mu}(A^2M-2AMB+MB^2) = R+\mu J - I \\
   \overset{(b)}{=}{}& \max_{R \geq 0, \mu, \tilde{\mu}, M}   -\tilde{\mu}\|AM-MB\|_F^2 - \tilde{\mu} c^2\sigma^2 n + \mu n \\
   &\text{subject to} \quad 2\tilde{\mu}(A^2M-2AMB+MB^2) = R+\mu J - I \\
   \overset{(c)}{=}{}& \max_{R \geq 0, \mu, \tilde{\mu}, \tilde{M}, M}   -\tilde{\mu}\|\tilde{M}\|_F^2  - \tilde{\mu} c^2\sigma^2 n + \mu n \quad \\
   &\text{subject to} \quad 2\tilde{\mu}(A\tilde{M}-\tilde{M}B) = R+\mu J - I, \ \tilde{M}=AM-MB \numberthis \label{eq:constraints_dual}.
\end{align*}
Above, $(a)$ follows from the first order optimality condition by noting that the inner minimization is an unconstrained convex problem over $X$; the constraint on $M$ corresponds to the gradient of the minimized quantity being $0$. Next, $(b)$ holds by multiplying the constraint on both sides by $M$, which implies
\begin{align*}
    \langle R+\mu J - I, M \rangle = 2\tilde{\mu} \langle A^2M-2AMB+MB^2, M \rangle = 2\tilde{\mu}\|AM-MB\|_F^2.
\end{align*}
Lastly, $(c)$ follows by introducing the auxiliary variable $\tilde{M} = AM-MB$.

\noindent \underline{\emph{Dual Certificate:}} Now, we provide a dual feasible solution $(R, \mu, \tilde{\mu}, M, \tilde{M})$ which provides a lower bound on $T^\star$. Denote $e$ the all-ones vector. Fix $\epsilon = \kappa/(\log n)^4$ for some large enough $\kappa > 0$, set $\tilde{\mu} = n (\log n)^{10.5}$, and
\begin{align*}
    \tilde{M} = \frac{1}{\tilde{\mu}}\sum_{|i-j| \geq \epsilon n} \frac{(e^T u_i) (e^T u_j)}{\lambda_i-\lambda_j} u_i u_j^T, \ Y = P_\epsilon(J) = \sum_{|i-j| < \epsilon n} (e^T u_i) (e^T u_j) u_i u_j^T,
\end{align*}
Moreover, we also set $\mu=1-\xi_1 - \xi_2$ with $\xi_1 = 2\tilde{\mu}\sigma \max_{i, j \in [n]} (\tilde{M}Z)_{i, j}$ and $\xi_2 = \max_{i, j \in [n]} (2Y-I-J)_{i, j}$, and
\begin{align*}
    R = \left(1+\xi_1 + \xi_2\right)J + I - 2Y - 2\tilde{\mu}\sigma \tilde{M} Z. \numberthis \label{eq:dualR}
\end{align*}
Next, note that $A \otimes I - I \otimes B$ is non-singular with probability 1 for $\sigma > 0$ \cite[Lemma 4]{araya2024graph}. Thus, we set $M$ to be the unique solution of $\tilde{M} = AM-MB$. The constraints are feasible by construction. The feasibility of equality constraint is verified as follows:
\begin{align*}
    \tilde{\mu}(A\tilde{M}-\tilde{M}B) &= \tilde{\mu}(A\tilde{M}-\tilde{M}A)- \tilde{\mu} \sigma \tilde{M} Z \\
    &=\sum_{|i-j| \geq \epsilon n} \frac{(e^T u_i) (e^T u_j)}{\lambda_i-\lambda_j} (Au_i u_j^T-u_i u_j^T A)-\tilde{\mu} \sigma \tilde{M} Z \\
    &= \sum_{|i-j| \geq \epsilon n}(e^T u_i) (e^T u_j) u_i u_j^T-\sigma \tilde{\mu} \tilde{M} Z = J - Y -  \tilde{\mu} \sigma \tilde{M}Z = (R+\mu J-I)/2.
\end{align*}
Also, by the definition of $\xi_1, \xi_2$, we have $R = \left(\xi_1 J - 2\tilde{\mu} \sigma \tilde{M}Z\right) + \left(\xi_2 J - (2Y-I-J)\right) \geq 0$.
\noindent \underline{\emph{Dual Objective Value:}} We first upper bound $\|\tilde{\mu}\tilde{M}\|_F$ using eigenvalue separation established in Lemma~\ref{lemma:2}. Indeed, with high probability,
\begin{align}
    \|\tilde{\mu}\tilde{M}\|_F^2 = \sum_{|i-j| \geq \epsilon n } \frac{(e^Tu_i)^2 (e^T u_j)^2}{(\lambda_i-\lambda_j)^2} \leq \frac{1}{C_\epsilon} \sum_{|i-j| \geq \epsilon n } (e^Tu_i)^2 (e^T u_j)^2 \overset{(*)}{\leq} O\left(n^2 (\log n)^{10}\right), \label{eq: norm_M}
\end{align}
where $(*)$ holds as, for any $i\leq n$, for any $c>1$, it holds with probability $1-n^{-c}$ for large enough $n$ that
\begin{align*}
    e^T u_i = \frac{e^T z}{\|z\|_2} \leq \frac{2 e^Tz}{\sqrt{n}} \leq O(\sqrt{\log n}),
\end{align*}
where $z \sim N(0, I_n)$. The second inequality holds from \cite[Lemma 15]{fan2023spectral}, and the last one from \cite[Lemma 13]{fan2023spectral}. By union bound, the same holds true simultaneously for all $i$ with probability at least $1-n^{-(c-1)}$. Moreover, $C_\epsilon = \Omega(\epsilon^2)$ by Lemma~\ref{lemma:2}, where, we set $\epsilon = \kappa/(\log n)^4$.
Next, we upper bound $\xi_1$. As $A$ and $Z$ are independent, $\tilde{M}$ is independent of $Z$. Thus, by \cite[Lemma 13]{fan2023spectral}, for any $i,j$, for any $c > 2$, with probability at least $1 - n^{-c}$, we have
\begin{align}
    |(\tilde{M}Z)_{i, j}| = \bigg|\sum_{l=1}^n \tilde{M}_{i, l} Z_{l, j}\bigg| \leq O\left(\sqrt{\frac{\log n}{n}\sum_{l=1}^n \tilde{M}_{i, l}^2} \right) \leq O\left(\sqrt{\frac{\log n}{n}} \|\tilde{M}\|_F\right). \label{eq: mz_bound}
\end{align}
This inequality simultaneously holds for all pairs $(i,j)$, with probability $1-n^{-(c-2)}$, by union bound.
Thus, we get
\begin{align}
    \xi_1 = \max_{i, j \in [n]}|2\tilde{\mu} \sigma(\tilde{M}Z)_{ij}| = O\left(\sigma\sqrt{\frac{\log n}{n}} \|\tilde{\mu}\tilde{M}\|_F\right) \overset{\eqref{eq: norm_M}}{=} O\left(\sigma\sqrt{n} (\log n)^{5.5}\right) = o(1), \label{eq:xi1}
\end{align}
where the last equality holds by \eqref{eq:relaxed_bound_on_sigma}. Now, we upper bound $\xi_2$. By Lemma~\ref{lem:relax}, we readily obtain $(2Y-I-J)_{i, j} \leq 0$ for $i \neq j \in [n]$ and $(2Y-I-J)_{i, i} = o(1)$ w.h.p. Thus, we have
\begin{align}
    \xi_2 = \max_{i, j \in [n]} (2Y-I-J)_{i, j} = o(1), \label{eq:xi2}
\end{align}
w.h.p. Combining \eqref{eq: norm_M}, \eqref{eq:xi1}, and \eqref{eq:xi2}, we get
\begin{align*}
    T^\star &\geq -\tilde{\mu}\|\tilde{M}\|_F^2  - \tilde{\mu} c^2\sigma^2 n + \mu n = -\tilde{\mu}\|\tilde{M}\|_F^2  - \tilde{\mu} c^2\sigma^2 n + \left(1-\xi_1-\xi_2\right) n\\
    &\geq -O\left(\frac{n^2 (\log n)^{10}}{\tilde{\mu}}\right)- \tilde{\mu} c^2 \sigma^2 n + n\left(1 - o(1)\right) = n - o(n),
\end{align*}
where the last equality holds due to our choices of $\tilde{\mu} = n(\log n)^{10.5}$ and \eqref{eq:relaxed_bound_on_sigma}. Thus, we obtain $T^\star \geq n -o(n)$, which completes the proof.
\end{proof}
\subsection{Discussion on the two approaches} \label{sec:discussion}
The primal and the dual approaches both provide a lower bound on $\Tr(X^\star)$ by using the upper bound on the objective function given in \eqref{eq:bound_on_obj}.
The primal approach lower bounds the trace for any feasible $X \in \cH_n$ of \eqref{eq:trace_min}, while the dual approach constructs a dual feasible solution, whose objective function value lower bounds the trace via weak duality. \rev{Although the two approaches look different, they run into the \emph{same} obstacle and resolve it with the \emph{same} technical hammer, which is precisely why Lemmas~\ref{lem:relax} and~\ref{lemma:2} are central to both. The obstacle is that the map $X\mapsto AX-XA$ has eigenvalues $\{\lambda_i-\lambda_j\}_{i,j\in[n]}$, which vanish at $i=j$ and are tiny when $|i-j|$ is small; the technical hammer is to restrict attention to the well-separated pairs $|i-j|\ge\epsilon n$, on which Lemma~\ref{lemma:2} controls the eigengap $C_\epsilon$ and Lemma~\ref{lem:relax} controls the relevant projection of $J$. We make this parallel precise below.}
\rev{In the primal approach the obstacle appears directly: the naive bound $\|AX-XA\|_F = \|A(X-I)-(X-I)A\|_F\ge \lambda_{\min}(A\otimes I-I\otimes A)\,\|X-I\|_F$ is vacuous, since the eigenvalues $\{\lambda_i-\lambda_j\}$ vanish at $i=j$ and can be as small as $n^{-3/2}$ for nearby indices \citep{smallgapsofgoe}. Projecting $X-I$ onto the band $|i-j|\ge\epsilon n$ excises these small gaps: Lemma~\ref{lemma:2} ensures $C_\epsilon=\tilde\Omega(1)$, while Lemma~\ref{lem:relax} gives $P_\epsilon(J)\approx I$, so that the projected objective still captures $n-\Tr(X)$ via Cauchy--Schwarz, as in \eqref{eq:array_bound}.}
\rev{The dual approach meets the identical obstacle, only inverted. Constructing a dual certificate for \eqref{eq:trace_min} requires the primal solution to be close to $I$, which by complementary slackness suggests the choice $R=J-I$ (exactly the certificate used in \cite{sushil25}). The dual constraint \eqref{eq:constraints_dual} then forces $\tilde{\mu}\tilde{M}$ to satisfy $\tilde{\mu}(A\tilde{M}-\tilde{M}A)\approx J-I$, i.e., $\tilde{\mu}\tilde{M}$ is obtained by inverting the same map $X\mapsto AX-XA$ on $J$:}
$$
\rev{\tilde{\mu}\tilde{M}\approx \sum_{i\ne j}\frac{(e^\top u_i)(e^\top u_j)}{\lambda_i-\lambda_j}\,u_iu_j^\top .}
$$
\rev{The very same small denominators $\lambda_i-\lambda_j$ that weakened the primal bound now \emph{blow up} $\|\tilde{\mu}\tilde{M}\|_F$, producing a small dual objective and hence a weak bound on $\Tr(X^\star)$; this is what restricts \cite{sushil25} to $\sigma=o(n^{-1})$. The remedy is the same as in the primal proof: we set $R$ to approximately the projection of $J-I$ onto $\{u_iu_j^\top\}_{|i-j|\ge\epsilon n}$ (the remaining terms in \eqref{eq:dualR} are lower order), which excises the small-gap pairs and yields $\tilde{\mu}\tilde{M}\approx \sum_{|i-j|\ge\epsilon n}\frac{(e^\top u_i)(e^\top u_j)}{\lambda_i-\lambda_j}u_iu_j^\top$. Now Lemma~\ref{lemma:2} bounds $\|\tilde{\mu}\tilde{M}\|_F$ through the same eigengap $C_\epsilon$, while Lemma~\ref{lem:relax} controls the error incurred by projecting $J-I$ off the band, ensuring $R$ stays close to $J-I$ and feasible. In short, the two proofs are mirror images: the band $|i-j|\ge\epsilon n$ and the two lemmas play identical roles in each, one controlling the eigengap $C_\epsilon$ (Lemma~\ref{lemma:2}) and the other a projection of $J$ (Lemma~\ref{lem:relax}).}
Establishing these lemmas (especially Lemma~\ref{lem:relax}) requires considerable technical overhead beyond \cite{sushil25}, highlighting our methodological contributions. The advantage of the dual approach compared to the direct approach is that it provides a tight lower bound on $\Tr(X^\star)$ even for the simplex relaxation $(\cC_n = \Delta_n)$. While this trace bound cannot directly be translated to guarantees on $\|X^\star-I\|_F$, it still provides partial results and progress towards that goal, and could serve as a good starting point for future work on simplex relaxation.
\section{Proof of Lemma \ref{lem:relax}.}\label{sec:lemrelax}
We shall consider the case of $Y_{i, j}$ for $i \neq j$ and $i = j$ separately. The details for the case of $i=j$ are deferred to Appendix~\ref{sec:iequalj} as it is easier than the case of $i \neq j$. Here we focus on bounding $Y_{i, j}$ for $i \neq j$. Denote by $e$ the all-ones vector. Write, recalling that $U=(u_1|\cdots |u_n)$, and denoting $V:=U^\top$:
\begin{align*}
Y_{i, j}&=\displaystyle \sum_{a,b:|b-a|<\epsilon n}\langle u_a u_b^\top, e e^\top\rangle\langle u_au_b^\top,e_i e_j^\top\rangle
= \sum_{a,b:|b-a|<\epsilon n} (u_a^\top e) (u_b^\top e) (u_b^\top e_i)(u_a^\top e_j)\\
&=
\sum_{a,b:|b-a|<\epsilon n}(e_a^\top Ve)(e_b^\top V e)(e_b^\top Ve_i)(e_a^\top V e_j).
\end{align*}
Now, $V$ is uniformly distributed over the orthogonal group $O_n$. Thus, $Ve_i$ is uniformly distributed over $\cS^{n-1}$, $V e_j$ is uniformly distributed over the intersection of $\cS^{n-1}$ with the orthogonal of $V e_i$, and $(n-2)^{-1/2}[Ve - V e_i -V e_j]$ is uniformly distributed over the intersection of $\cS^{n-1}$ with the orthogonal of $\hbox{vect}\{Ve_i,Ve_j\}$. To control the distribution of $Y_{i, j}$ we use the following construction. With a slight abuse of notation, take $X$, $X'$, $X''$ three i.i.d. Gaussian random vectors with distribution $\cN(0,I)$. Let then
$$
v:=\frac{1}{\alpha} X,\quad
v':=\frac{1}{\beta}(X'-r v),\quad
v'':=\frac{1}{\gamma}(X'' -s v -t v')
$$
denote the Gram-Schmidt orthonormalization of $(X,X',X'')$, where
\begin{align*}
\alpha:=\|X\|, \;
r:= v^\top X', \;
\beta:=\|X'-rv\|, \;
s:=v^\top X'', \;
t:=v'^\top X'', \;
\gamma:=\|X''-sv-tv'\|.
\end{align*}
Then $Y$ has the same distribution as
$$
Y:=\sum_{a,b:|b-a|<\epsilon n}v_b v'_a \left(v_a+v'_a +\sqrt{n-2}v''_a\right)\left(v_b+v'_b+\sqrt{n-2}v''_b\right).
$$
For two independent standard Gaussian random variables $Z_1$, $Z_2$ and $x\in\dR$, we have
$$
    \dE\left(e^{x Z_1^2}\right)=(1-2x)_+^{-1/2}, \;
    \dE\left(e^{x Z_1 Z_2}\right)=(1-x^2)_+^{-1/2}.
$$
Let $\bar{Z}_1=(Z_1(1),\ldots, Z_1(n))$, $\bar{Z}_2=(Z_2(1),\ldots,Z_2(n))$ be two independent vectors both distributed according to $\cN(0,I)$.
By Chernoff's argument, we have the existence of two non-negative convex functions $g$, $h$, where $g$ achieves its minimum at $1$, with $g(1)=g'(1)=0$ and $g''(1)>0$, and likewise $h$ achieves its minimum at $0$ with $h(0)=h'(0)=0$, $h''(0)>0$, such that for all $x>0$,
$$
\dP\left(\bigg|\sum_{i=1}^n Z_1(i)^2-n\bigg|\ge xn\right)\le  2 e^{-n g(1+x)}
 \hbox{ and }
\dP\left(\bigg|\sum_{i=1}^n Z_1(i)Z_2(i)\bigg|\ge x n\right)\le 2 e^{-n h(x)}.
$$
Thus for any constant $c>0$, for large enough constant $C>0$ (for instance, we can set $C=3c\max(1/g''(1),1/h''(0))$), taking $x=C\sqrt{\ln(n)/n}$ we obtain
$$
\dP\left(\bigg|\sum_{i=1}^n Z_1(i)^2-n\bigg|\ge C\sqrt{n\ln(n)}\right)\le n^{-c}
\hbox{ and }
\dP\left(\bigg|\sum_{i=1}^n Z_1(i)Z_2(i)\bigg|\ge C\sqrt{n\ln(n)}\right)\le n^{-c}.
$$
Thus with probability $1-O(n^{-c})$, one has:
$$
\begin{array}{ll}
\alpha&=\sqrt{n}(1+O(\sqrt{\ln(n)/n}))=\sqrt{n}+O(\sqrt{\ln(n)}),\\
r&=\alpha^{-1}X^\top X'=O(\alpha^{-1}\sqrt{n\ln(n)})=O(\sqrt{\ln(n)}),\\
\beta&=\sqrt{n+O(\sqrt{n\ln(n)})+O(r/\alpha)\sqrt{n\ln(n)}+(r/\alpha)^2(n+O(\sqrt{n\ln(n)}))}\\
&=\sqrt{n}+O(\sqrt{\ln(n)}),\\
s,t&=O(\sqrt{\ln(n)}),\;
\gamma=\sqrt{n}+O(\sqrt{\ln(n)}).
\end{array}
$$
Write then, denoting $r':=r/\alpha$, $s'=s/\alpha$, $t'=t/\beta$:
\begin{align*}
\alpha \beta Y={}&\sum_{|a-b|<\epsilon n}X_a(X'_b-r' X_b)\times \\
&\times\left[\left(\frac{1}{\alpha}-\frac{r'}{\beta}-\frac{\sqrt{n-2}s'}{\gamma}+\frac{r't'\sqrt{n-2}}{\gamma}\right)X_a+\left(\frac{1}{\beta}-\frac{\sqrt{n-2}t'}{\gamma}\right)X'_a+\frac{\sqrt{n-2}}{\gamma}X''_a\right]\\
&\times\left[\left(\frac{1}{\alpha}-\frac{r'}{\beta}-\frac{\sqrt{n-2}s'}{\gamma}+\frac{r't'\sqrt{n-2}}{\gamma}\right)X_b+\left(\frac{1}{\beta}-\frac{\sqrt{n-2}t'}{\gamma}\right)X'_b+\frac{\sqrt{n-2}}{\gamma}X''_b\right]\\
={}& \sum_{|a-b|<\epsilon n}X_a(X'_b+\theta X_b)\left[\theta X_a+\theta X'_a+(1+o(1))X''_a\right]\left[\theta X_b+\theta X'_b+(1+o(1))X''_b\right],
\end{align*}
where in the last expression, $\theta$ is a short-hand notation for $O\left(\sqrt{\ln n/n}\right)$. This yields:
\begin{align*}
\alpha \beta Y=&(1+o(1))\sum_{|a-b|<\epsilon n}X_aX'_b X''_a X''_b+\theta^3\sum_{|a-b|<\epsilon n}X_a X_b(X_a+X'_a)(X_b+X'_b)\\
&+\theta\sum_{|a-b|<\epsilon n}X_a[X'_bX''_a(X_b+X'_b)+X'_b(X_a+X'_a)X''_b+X_b X''_a X''_b]\numberthis \label{eq:y_as_function_of_x}\\
&+\theta^2\sum_{|a-b|<\epsilon n}X_a[X_b(X_a+X'_a)X''_b+X_b X''_a(X_b+X'_b)+X'_b(X_a+X'_a)(X_b+X'_b)].
\end{align*}
Thus, we obtain a tractable expression of $Y$ in terms of i.i.d. Gaussian random vectors $(X,X',X'')$. We then upper bound the above expression using concentration inequalities for Gaussian vectors and matrices. The details are deferred to Appendix~\ref{app:ineqj}.
\section{Conclusion}
We studied convex relaxations of the quadratic assignment formulation for graph alignment under the correlated GOE model with correlation $1/\sqrt{1+\sigma^2}$. Focusing on relaxations over the Birkhoff polytope and a constrained hypercube, we established sharp recovery guarantees. In particular, we showed that when $\sigma = o(n^{-1/2}/\log^4 n)$, the solution of the relaxed problem concentrates around the ground-truth permutation matrix in Frobenius norm. Combined with existing lower bounds, our results precisely characterize the noise threshold at which $\|X^\star-\Pi^\star\|_F^2$ transitions from $o(n)$ to $\Omega(n)$. Our analysis further shows that the hypercube relaxation achieves the same guarantees as the Birkhoff relaxation, despite operating over a strictly larger feasible set. The resulting concentration bounds imply that simple post-processing methods recover all but a vanishing fraction of the true permutation. An important open direction is to develop a theoretical understanding of LAPs for $\sigma = \Omega(n^{-1/2})$ and to extend these guarantees to broader random graph models. \rev{A promising route is to establish diagonal dominance of the unregularized minimizer, as discussed in Section~\ref{sec:main_result}. Like Frobenius proximity to $\Pi^\star$, diagonal dominance guarantees that the post-processing of \eqref{eq:post} recovers all but a vanishing fraction of $\pi^\star$; however, whereas \cite{sushil25} showed that Frobenius proximity provably fails above $\sigma = n^{-1/2}$, diagonal dominance is expected to persist for substantially larger $\sigma$, in line with the empirical performance of the Birkhoff and simplex relaxations.}
\newpage
\bibliography{references}
\appendix
\section{Proof Details for Lemma~\ref{lem:relax}} \label{app:lemrelax}
\subsection{A concentration lemma}
We first need the following technical lemma concerning concentration of sums of i.i.d. random variables with zero mean and moment bounds.
\begin{lemma} \label{lemma:moment}
    Let $U \in \bbR^n$ be an i.i.d. random vector with $\dE(U)=0$ and $\dE(U_i^{2k}) \leq r_{k}$ for some constants $k, r_k>0$. Then there exists a constant $c_k$ depending only on $r_k$ and $k$ such that, with probability at least $1-c_k n^{-k/3}$,
    \begin{align*}
        \sum_{a=1}^n U_a \leq n^{2/3}.
    \end{align*}
\end{lemma}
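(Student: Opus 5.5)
The plan is a high-moment Markov (Chebyshev-type) inequality applied to the centered sum $S:=\sum_{a=1}^n U_a$. It uses only the independence and mean-zero property of the coordinates together with the $2k$-th moment bound. The target is a moment bound of the form
\begin{equation*}
\dE\left(S^{2k}\right)\le c_k\, n^{k},
\end{equation*}
with $c_k$ depending only on $k$ and $r_k$. (If $k$ is not an integer, first replace $k$ by $\lfloor k\rfloor$ or argue with $\lceil\cdot\rceil$; I will treat integer $k$, which is the case needed later.) Markov's inequality then gives
\begin{equation*}
\dP\left(S\ge n^{2/3}\right)\le \dP\left(|S|\ge n^{2/3}\right)\le \frac{\dE(S^{2k})}{n^{4k/3}}\le c_k\, n^{k-4k/3}=c_k n^{-k/3},
\end{equation*}
which is exactly the claimed bound.

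The main step is the moment bound. Expand
\begin{equation*}
\dE(S^{2k})=\sum_{a_1,\dots,a_{2k}\in[n]}\dE\left(U_{a_1}\cdots U_{a_{2k}}\right).
\end{equation*}
Group the terms according to the partition of $\{1,\dots,2k\}$ induced by equal indices. By independence and $\dE(U_a)=0$, any partition with a singleton block contributes zero. A surviving partition therefore has all blocks of size at least $2$, hence at most $k$ blocks. For a partition with $m\le k$ blocks of sizes $p_1,\dots,p_m$, there are at most $n^m\le n^k$ index choices. Each expectation factorizes as $\prod_\ell \dE(U_1^{p_\ell})$, and each factor is bounded via H\"older/Lyapunov by $\dE(|U_1|^{p_\ell})\le \dE(U_1^{2k})^{p_\ell/(2k)}\le \max(1,r_k)$, since $p_\ell\le 2k$. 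The number of partitions of a $2k$-set is a constant (the Bell number $B_{2k}$). This yields $\dE(S^{2k})\le B_{2k}\max(1,r_k)^{k}\, n^k$, so one can take $c_k=B_{2k}\max(1,r_k)^{k}$.

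This counting of partitions with no singletons is the only point needing care. The key check is that the Lyapunov step keeps the constant independent of $n$, which holds because every block size is at most $2k$. No further difficulty is expected. The same argument applied to $-U$ also gives the two-sided bound $|S|\le n^{2/3}$ with the same probability, which is likely how the lemma is used later.
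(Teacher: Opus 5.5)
Your proof is correct and follows essentially the same route as the paper. Both expand $\dE(S^{2k})$, discard index tuples in which some index appears exactly once (so at most $k$ distinct indices survive, giving $O(n^k)$ terms), bound each surviving expectation via H\"older/Lyapunov by a constant depending only on $k$ and $r_k$, and apply Markov's inequality at level $n^{2/3}$. Your bookkeeping by set partitions of the $2k$ positions is a slightly cleaner version of the paper's sum over index sets $E$ with $|E|\le k$: it avoids overcounting and bounds $|\dE(\cdot)|$ term by term.

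Like the paper, your argument genuinely needs $k$ to be an integer. Your suggested workarounds for non-integer $k$ do not quite work: $\lfloor k\rfloor$ weakens the exponent, and $\lceil k\rceil$ requires a moment that is not assumed. This does not matter for how the lemma is used later.
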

\begin{proof}
We have
\begin{align*}
    \dE\left(\sum_{a=1}^n U_a\right)^{2k} &= \dE \left(\sum_{i_1, \hdots, i_{2k} =1}^n \prod_{j=1}^{2k} U_{i_j} \right).
    \numberthis \label{eq:moment_sum}
\end{align*}
Observe that, for any $(i_1,\dots,i_{2k})\in\{1,\dots,n\}^{2k}$, if $\{i_1,\dots,i_{2k}\}$ has cardinality strictly greater than $k$, then it must be that an index appears exactly once in the $2k$-uplet, hence, since $\dE(U)=0$,
\begin{equation*}
\dE \left(\prod_{j=1}^{2k} U_{i_j} \right) = 0.
\end{equation*}
Therefore, if we denote
\begin{equation*}
\mathcal{E}_{n,k}
=\left\{ E\subset \{1,\dots,n\}, \operatorname{Card}(E) \leq k\right\},
\end{equation*}
we have
\begin{align*}
    \dE\left(\sum_{a=1}^n U_a\right)^{2k}
    &\leq \sum_{E\in\mathcal{E}_{n,k}}
    \sum_{i_1, \hdots, i_{2k} \in E}
    \dE \left( \prod_{j=1}^{2k} U_{i_j} \right)
    &\overset{(*)}{\leq} \sum_{E\in\mathcal{E}_{n,k}}
    \operatorname{Card}(E)^{2k}
    \dE (U^{2k})
    &\overset{(**)}{\leq} n^k k^{2k} r_k,
\end{align*}
where $(*)$ follows as, by Hölder's inequality, for any set of iid random variables $Y_1,\dots,Y_s$ and any set of nonnegative integers $n_1,\dots,n_s$, it holds that
\begin{equation*}
\dE\left|\prod_{j=1}^s Y_j^{n_j}\right|
\leq \dE|Y_1^{n_1+\dots+n_s}|,
\end{equation*}
and $(**)$ is true because $\operatorname{Card}(E)\leq k$ for any $E\in\mathcal{E}_{n,k}$ and $\operatorname{Card}(\mathcal{E}_{n,k}) \leq n^k$.
Now, the Markov's inequality gives us
\begin{align*}
    \dP\left(\bigg|\sum_{a=1}^n U_a\bigg| > n^{2/3}\right) \leq n^{-4k/3} n^k k^{2k} r_k = r_k k^{2k} n^{-k/3}.
\end{align*}
This completes the proof.
\end{proof}
\subsection{The case of \texorpdfstring{$Y_{i, j}$}{} for \texorpdfstring{$i \neq j$}{}} \label{app:ineqj}
Each of the sums in \eqref{eq:y_as_function_of_x} can be split according to whether $a=b$ or $a\ne b$. The resulting sums when $a=b$ give rise to contributions of the following types:
\begin{align*}
S_0:&=\sum_{a=1}^n X_a X'_a (X''_a)^2,\; S_1:=\theta\sum_{a=1}^n X_a^2X'_a X''_a,\;S'_1:=\theta\sum_{a=1}^n X_a^2 (X''_a)^2,\\
S_2&:=\theta^2\sum_{a=1}^n X_a^3 X''_a,\; S_3:=\theta^3 \sum_{a=1}^n X_a^4,
\end{align*}
as well as sums of lesser order such as $\theta^3\sum_{a=1}^n X_a^3 X'_a$ and $\theta^3\sum_{a=1}^n X^2_a X'^2_a$. We start by upper bounding $S_0$ below:
\begin{align*}
    S_0 = \sum_{a=1}^n X_a X'_a (X''_a)^2 \leq O\left(\sqrt{\log n \sum_{a=1}^n (X'_a)^2 (X''_a)^4}\right) \leq O\left(\sqrt{n} (\log n)^{2}\right),
\end{align*}
where the first inequality holds w.h.p. $(1-n^{-c})$ by \cite[Lemma 13]{fan2023spectral} (conditioning on $X'$ and $X''$) and the second one by bounding each component of $X'$ and $X''$ by $O(\sqrt{\log n})$ w.h.p. $(1-n^{-c})$. The same argument gives w.h.p. $(1-n^{-c})$
\begin{align*}
    S_1 = O((\log n)^{2.5}), \; S_2 = O((\log n)^{3}/\sqrt{n}).
\end{align*}
Next, with probability at least $1-n^{-c}$, we have
\begin{align*}
    \sum_{a=1}^n X_a^2 (X''_a)^2 &= n + \sum_{a=1}^n (X_a^2-1) + \sum_{a=1}^n ((X''_a)^2-1) + \sum_{a=1}^n (X_a^2-1)((X''_a)^2-1) \\
    &\leq n + O(n^{2/3}) \quad \text{by Lemma~\ref{lemma:moment}},
\end{align*}
which ensures $S_1' = O(\sqrt{n \log n})$. Similar argument gives $S_3 = O((\log n)^{3/2}/\sqrt{n})$.
So, letting $Y=Y_{=}+Y_{\ne}$ where the subscript corresponds to summations with $a=b$ or $a\ne b$, recalling that $\alpha \beta =\Omega(n)$, we find
$$
Y_==O\left((\log n)^{2}/\sqrt{n}\right).
$$
It remains to control $Y_{\ne}$. Taking into account symmetries between $X$, $X'$, $X''$ and between $a$ and $b$, denoting by $S$ the summation over $a$, $b$ in $[n]$ such that $a\ne b$ and $|a-b|<\epsilon n$, we find (omitting the sums of lesser order as we did for $Y_=$) that
\begin{align*}
Y_{\ne}=O\bigg(\frac{1}{n}\max\big[S(X_a&X_bX'_aX''_b), \theta S(X_a^2X'_b X''_b), \theta S(X_aX_b X'_a X'_b), \\
&\theta^2S(X_a^2X_bX'_b),\theta^2S(X_a^2(X'_b)^2),\theta^3 S(X_a^2 X_b^2)\big]\bigg).
\end{align*}
\paragraph{Control of $S(X^2_aX^2_b)$:}
Rewrite it as
$$
S(1)+S(X^2_a-1)+S(X^2_b-1)+S((X^2_a-1)(X^2_b-1)).
$$
The first term $S(1)$ is $(1+o(1))n^2\epsilon$. The second and third term are weighted sums of i.i.d. centered random variables with weights $(1+o(1))(2n\epsilon)$, and Chernoff bounds entail that these are with probability $1-O(n^{-c})$ equal to $O(n\epsilon\sqrt{n\ln n})$. The third term is a quadratic form of a centered i.i.d. random vector with sub-exponential tails. Thus by Proposition 1.1 in \cite{10.1214/21-EJP606}
 we have
$$
\dP(|S(X^2_a-1)S(X^2_b-1))|\ge t)\le 2 \exp\left(-\frac{1}{C}\min\left(\frac{t^2}{\|M\|^2_F},\sqrt{\frac{t}{\|M\|_{op}}}\right)   \right),
$$
where $C$ is a universal constant and  $M_{ab}=\mathbf{1}_{a\ne b, |b-a|<\epsilon n}$. Here we have $\|M\|_{op}\le 2n \epsilon$, $\|M\|^2_{F}\le 2 n^2 \epsilon$, and we find that by taking $t=\Omega(n\epsilon \ln^2 n)$, we obtain a right-hand side in $O(n^{-\Omega(1)})$.
Altogether, with probability $1-O(n^{-c})$ one has $S(X^2_aX^2_b)=O(n^2 \epsilon)$, and thus its contribution to the upper-bound on $Y_{\ne}$ is $O(n^{-1}\theta^3 n^2\epsilon)=O(\ln^{3/2}n/n^{1/2})$.
\paragraph{Control of $S(X^2_aX'^2_b)$:}
Rewrite it as
$$
S(1)+S(X^2_a-1)+S(X'^2_b-1)+S((X^2_a-1)(X'^2_b-1)).
$$
We can apply the same bounds as in the previous paragraph, and we obtain that the contribution to the upper-bound on $Y_{\ne}$ is with probability $1-O(n^{-c})$ in $O(n^{-1}\theta^2 n^2 \epsilon)=O(\epsilon \ln n)$.
\paragraph{Control of $S(X_aX'_aX_bX'_b)$:} Write $\tilde{X}_a=X_aX'_a$. Thus we are again evaluating $S(\tilde{X}_a \tilde{X}_b)$, where here $\tilde{X}_a$ are i.i.d., zero-mean, and sub-exponential. Relying again on Proposition 1.1 in \cite{10.1214/21-EJP606} we have
$$
\dP(|S(\tilde{X}_a\tilde{X}_b)|\ge t)\le 2\exp\left(-\frac{1}{C}\min\left(\frac{t^2}{\|M\|^2_F},\sqrt{\frac{t}{\|M\|_{op}}}\right)   \right),
$$
and for $t=\Omega(n\epsilon \ln^2 n)$ we find that with probability $1-O(n^{-c})$, $S(Y_aY_b)=O(n\epsilon\ln^2n)$. Thus the contribution to the upper bound on $Y_{\ne}$ is $O(n^{-1}\theta n\epsilon \ln^2n)=O(\ln^{3/2}n/n^{1/2})$.
\paragraph{Control of $S(X^2_aX_bX'_b)$:}
This admits, conditionally on $X$, a Gaussian distribution with variance $\sum_b \left( X_b \sum_{a\ne b,|b-a|<n\epsilon}X^2_a\right)^2$. Thus this is, with probability $1-O(n^{-c})$, no larger than
$$
\sqrt{\ln (n)} \sqrt{\sum_b \left( X_b \sum_{a\ne b,|b-a|<n\epsilon}X^2_a\right)^2}.
$$
We can bound with high ($1-O(n^{-c})$) probability each $X_a$ by $O(\sqrt{\ln n})$, giving an upper bound in
$$
O(\sqrt{\ln (n)} \sqrt{n \epsilon^2\ln^3n n^2})=O(\epsilon n^{3/2}\ln^2 n).
$$
Thus we get an overall contribution to the upper-bound on $Y_{\ne}$ bounded by $O(n^{-1}\theta^2 n^{3/2}\ln^2 n)=O(\ln^3n/n^{1/2})$.
\paragraph{Control of $S(X^2_aX'_bX''_b)$:} Conditionally on $X,X'$ this is a Gaussian random variable with variance $\sum_b \left(X'_b\sum_{a\ne b, |b-a|<n\epsilon}X_a^2\right)^2$. Again with high probability ($1-o(n^{-c})$) we obtain the upper bound
$$
O\left(\sqrt{\ln n}\sqrt{\sum_b \left(X'_b\sum_{a\ne b, |b-a|<n\epsilon}X_a^2\right)^2}\right)
$$
Upper-bounding the entries of $X$ and $X'$ by $O(\sqrt{\ln n})$ this is also
$$
O\left((\ln^2 n)\sqrt{n (n\epsilon)^2}\right)=O((\ln^2 n)\epsilon n^{3/2}).
$$
The contribution to the upper-bound of $Y_{\ne}$ is then $O(n^{-1}\theta (\ln^2 n) \epsilon n^{3/2})=O(\epsilon \ln^{5/2}n)$.
\paragraph{Control of $S(X_aX_bX'_aX''_b)$:} Conditionally on $X,X'$ this is Gaussian with variance
$$\sum_b \left(X_b\sum_{a\ne b:|b-a|<n\epsilon}X_a X'_a\right)^2.
$$
The term $X_b\sum_{a\ne b:|b-a|<n\epsilon}X_a X'_a$ is Gaussian with variance $V_b:=X_b^2\sum_{a\ne b:|b-a|<n\epsilon}X_a^2$ conditionally on $X$. This variance term $V_b$ is, with high probability ($1-o(n^{-c})$), $O(\epsilon n \ln^{2}n)$, so that $\left(X_b\sum_{a\ne b:|b-a|<n\epsilon}X_a X'_a\right)^2$ is, with high probability ($1-o(n^{-c})$), $O(\epsilon n \ln^{3}n)$. Thus with high probability ($1-o(n^{-c})$), $S(X_aX_bX'_aX''_b)$ is
$$
O\left(\sqrt{\ln n} \sqrt{\epsilon n^2 \ln^3n}\right)=O(\sqrt{\epsilon} n \ln^2n).
$$
Thus the contribution to the upper-bound on $Y_{\ne}$ is $O(n^{-1}\sqrt{\epsilon}n\ln^2n)=O(\sqrt{\epsilon} \ln^2n)$.
Of all these 6 terms, the last one is the largest. By a union bound, assuming we took $c>2$ in our control in $(1-o(n^{-c}))$ of the probability of desired events, we thus have $|Y_{ij}|\le O(\sqrt{\epsilon}\ln^2n)$ for all $i,j\in [n]$, $i\ne j$.
Thus it follows that with high probability, for all $i\ne j\in [n]$, $|Y_{ij}|\le 1/2$ provided  $\epsilon\le \kappa /\ln^4n$ for sufficiently small (but still $\Omega(1)$) constant $\kappa>0$.
\subsection{The case of \texorpdfstring{$Y_{i, j}$}{} for \texorpdfstring{$i = j$}{}} \label{sec:iequalj}
\paragraph{Bounding $Y_{kk}$:} This will follow the same path as for bounding $Y_{ij}$, while being slightly simpler. Write
$$
Y_{kk}=\sum_{a,b:|b-a|<n\epsilon}(e_a^\top Ve)(e_b^\top Ve)(e_a^\top V e_k)(e_b^\top V e_k).
$$
We use two Gaussian vectors $X,X'$ as before to form $V e_k=\alpha^{-1} X$, and
$$
V e=\alpha^{-1} X +\beta^{-1}\sqrt{n-1}(X'-r\alpha^{-1} X),
$$
so that
\begin{align*}
\alpha^2 Y_{kk}={}& \sum_{a,b:|b-a|<n\epsilon}X_a X_b \times \\
&\left(\left[\frac{1}{\alpha}-\frac{\sqrt{n-1} r}{\alpha \beta}\right]X_a+\frac{\sqrt{n-1}}{\beta}X'_a\right)\left(\left[\frac{1}{\alpha}-\frac{\sqrt{n-1} r}{\alpha\beta}\right]X_b+\frac{\sqrt{n-1}}{\beta}X'_b\right)
\\
={}& \sum_{a,b:|b-a|<n\epsilon}X_a X_b (\theta X_a +(1+o(1))X'_a)(\theta X_b + (1+o(1))X'_b)\\
={}& (1+o(1))\sum_{a,b:|b-a|<n\epsilon}X_a X_b\left[X'_a X'_b+\theta (X_a X'_b+X'_a X_b)+\theta^2 X_a X_b\right],
\end{align*}
where as before $\theta=O(\sqrt{\ln n/n})$.
We first evaluate the sums over $a=b$, which read
$$
S_0=\sum_a X_a^2(X'_a)^2, \; S_1=\theta \sum_a X_a^3 X'_a,\; S_2=\theta^2\sum_a  X_a^4.
$$
By Lemma~\ref{lemma:moment}, the first sum can be written
$$
S_0=n+\sum_a X^2_a(X'^2_a-1) +\sum_a (X^2_a-1) = n + O(n^{2/3}).
$$
Thus, with high probability $1-o(n^{-c})$, $S_0=n + O(n^{2/3})$. The same argument entails that with high probability ($1-o(n^{-c})$), $S_1=O(\theta n^{2/3})=O(\sqrt{\ln n} n^{1/6})$, $S_2=O(\theta^2 n)=O(\ln n)$.
We decompose again $Y_{kk}$ as $Y_=+Y_{\ne}$. The previous evaluations together with $\alpha^2=n(1+o(1))$ give us
$$
Y_==1+o(1).
$$
Next we evaluate $Y_{\ne}$. We readily have:
$$
Y_{\ne}=O\left(\frac{1}{n}\max\left[S(X_a X_b X'_a X'_b),\theta S(X^2_a X_b X'_b),\theta^2 S(X^2_aX^2_b)\right]\right).
$$
As previously established, $S(X^2_aX^2_b)=O(n^2\epsilon)$, and the corresponding contribution to the upper-bound on $Y_{\ne}$ is $O(n^{-1}\theta^2 n^2\epsilon)=O(\epsilon \ln n)$. We also established $S(X^2_a X_b X'_b)$ is $O(\epsilon n^{3/2}\ln^2n)$. Thus its contribution to the upper-bound on $Y_{\ne}$ is $O(n^{-1}\theta \epsilon n^{3/2}\ln^2 n)=O(\epsilon \ln^{5/2}n)$. Finally, we have established that $S(X_a X_b X'_a X'_b)=O(n\epsilon \ln^2 n)$. This term thus gives a contribution in $O(\epsilon \ln^2n)$ to the upper bound. Summarizing, we find that $Y_{\ne}=O(\epsilon \ln^{5/2}n)$. Under the previously identified condition that $\epsilon\le \kappa/\ln^4 n$, this is $o(1)$. For this regime, we thus have that
$$
Y_{kk}=1+o(1),
$$
which concludes the proof of Lemma \ref{lem:relax}.
\section{Proof of Lemma \ref{lemma:2}.} \label{app:lemma2}
Introduce the semi-circle density
$$
\forall x\in \dR,\quad \rho_{sc}(x):=\frac{1}{2\pi}\sqrt{(4-x^2)_+},
$$
and the so-called {\em classical location} $\nu_j$ of eigenvalue $\lambda_j$ of the Gaussian Wigner matrix $A_2$, defined through
$$
\int_{-\infty}^{\nu_j}\rho_{sc}(x)dx=\frac{j}{n}\cdot
$$
The main result in Erd\H{o}s, Yau and Yin \cite{ERDOS20121435} implies the existence of some constant $C=\Theta(1)$ such that with high probability,
\begin{equation}\label{eq:rigidity}
\forall j\in [n],\quad |\lambda_j -\nu_j|\le \left( \ln n\right)^{C \ln(\ln n)} n^{-2/3}.
\end{equation}
In fact the results of \cite{ERDOS20121435} give a stronger guarantee, a precision on the probability of the ``rigidity property'' to fail, and apply beyond the case of Gaussian Wigner matrices, but their implication \eqref{eq:rigidity} is sufficient for our purpose.
Note that $\rho_{sc}(x)$ is upper-bounded by $1/\pi$ on $\dR$, so that
$$
\forall i,j\in[n],\; i<j,\; \int_{\nu_i}^{\nu_j}\rho_{sc}(x)dx\le \frac{1}{\pi}(\nu_j-\nu_i).
$$
By definition of the classical locations $\nu_j$, this entails
$$
\forall i,j\in[n],\; i<j,\; \nu_j-\nu_i\ge \pi \frac{j-i}{n}\cdot
$$
Recall Lemma \ref{lemma:2}'s assumption that $\epsilon\ge n^{\delta-2/3}$ for some $\delta=\Omega(1)$. The term $\left( \ln n\right)^{C \ln(\ln n)}$ in \eqref{eq:rigidity} is less than $n^{\delta/2}$ for large enough $n$. It then follows from \eqref{eq:rigidity} and the previous display that
$$
\begin{array}{ll}
\forall i,j\in[n],\; i<j,\; \lambda_j-\lambda_i&\ge \nu_j-\nu_i -2 n^{\delta/2-2/3}\\
&\ge \pi\frac{j-i}{n}-2n^{\delta/2-2/3}.
\end{array}
$$
This yields
$$
\begin{array}{ll}
C_\epsilon&=\min_{j-i\ge \epsilon n}(\lambda_j -\lambda_i)^2\\
&\ge \min_{i}(\lambda_{i+\epsilon n}-\lambda_i)^2\\
&\ge \left(\pi\epsilon -2 n^{\delta/2-2/3}\right)^2\\
&\ge \left((\pi-1)\epsilon + \Omega(n^{\delta-2/3})-2n^{\delta/2-2/3}\right)^2\\
&\ge (\pi-1)^2\epsilon^2
\end{array}
$$
for sufficiently large $n$, which concludes the proof.
\end{document}